\pgfplotsset{compat=newest}
\newcommand{\bitem}{\begin{itemize}}
\newcommand{\eitem}{\end{itemize}}
\newcommand{\bpm}{\begin{pmatrix}}      
\newcommand{\epm}{\end{pmatrix}}
\newcommand{\normc}[1]{| #1 |}
\newcommand{\bbR}{\mathbb{R}}
\providecommand{\iprod}[2]{\langle#1,#2\rangle}
\newcommand{\bi}{\begin{itemize}}
\newcommand{\ei}{\end{itemize}}
\newcommand{\Ss}{\mathcal{S}}
\newcommand{\cref}[1]{ {\tiny[{#1}]}}
\newcommand{\tm}[1]{}
\newcommand{\beq}{\begin{equation}}
\newcommand{\eeq}{\end{equation}}
\newcommand{\beqa}{\begin{eqnarray}}
\newcommand{\eeqa}{\end{eqnarray}}
\newcommand{\bc}{\begin{center}}
\newcommand{\ec}{\end{center}}
\newcommand \Sone       {{{\Ss}^1}}                           
\newcommand \TV         {{TV}}                              
\newtheorem{prop}{Proposition}
\newtheorem{cor}{Corollary}
\newcommand{\dat}{\boldsymbol{\rho}}
\newcommand{\dats}{\boldsymbol{\sigma}}
\newcommand{\reg}{\boldsymbol{\Phi}}
\newcommand{\Gr}{\mathbf{1}}
\newcommand{\ul}{\boldsymbol{u}}
\newcommand{\vl}{\boldsymbol{v}}
\newcommand{\gl}{\boldsymbol{g}}
\newcommand{\ql}{\boldsymbol{q}}
\newcommand{\figROF}{
\begin{figure}[t!]
  \centering
  \captionsetup[subfloat]{labelformat=empty,justification=centering,singlelinecheck=false}
  \subfloat[][\scriptsize Direct Optimization of \eqref{eq:cont_rof},\\
  $t=0.6s$, $11.78$ MB]{
    \includegraphics[width=0.148\textwidth]{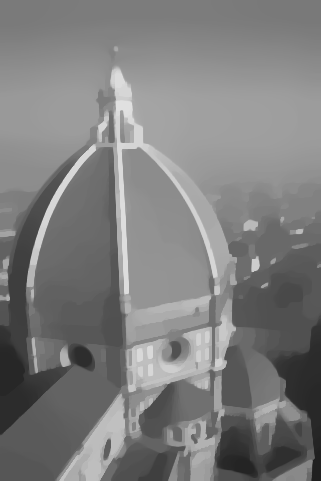}
  }
  \subfloat[][\scriptsize Baseline ($L=8$),\\ $t=\infty$, $113$ MB ]{
    \includegraphics[width=0.148\textwidth]{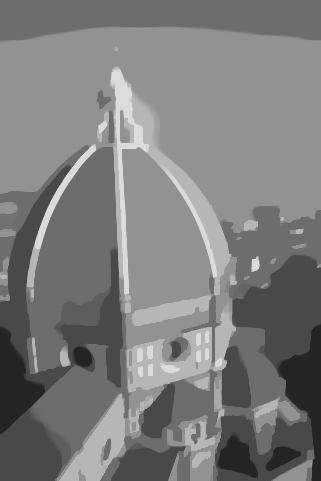}
  }
  \subfloat[][\scriptsize Baseline ($L=16$),\\ $t=\infty$, $226$ MB]{
    \includegraphics[width=0.148\textwidth]{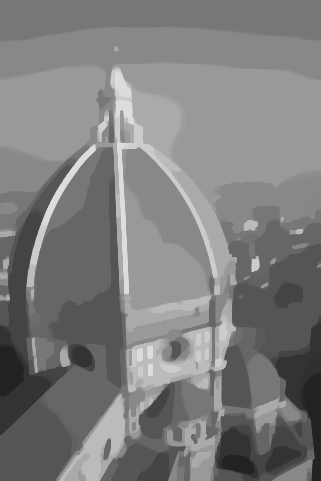}
  }\\[-0.24cm]
  \subfloat[][\scriptsize Baseline ($L=256$),\\ $t=\infty$, $3619$ MB]{
    \includegraphics[width=0.148\textwidth]{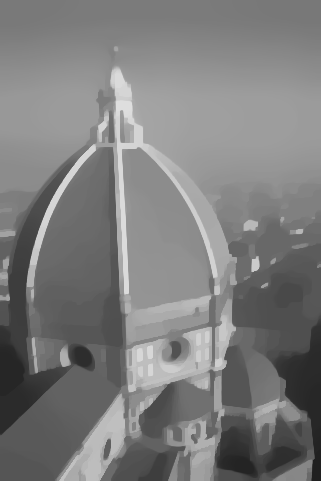}
  }
  \subfloat[][\scriptsize Proposed ($L=2$) \\ $t=1s$, $27$ MB]{
    \includegraphics[width=0.148\textwidth]{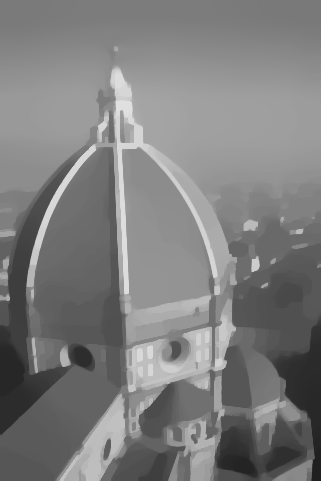}
  }
  \subfloat[][\scriptsize Proposed ($L=10$) \\ $t=15s$, $211$ MB]{
    \includegraphics[width=0.148\textwidth]{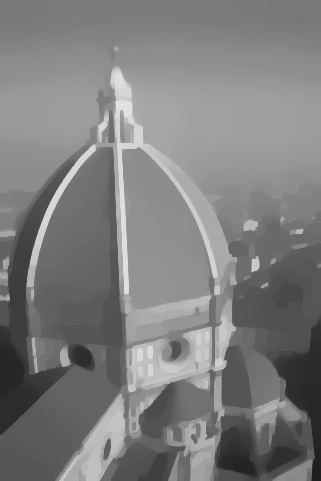}
  }\\[1mm]
  \caption{Denoising comparison. We compare the proposed method to the baseline method
    \cite{PCBC-SIIMS} on the convex ROF problem. We show the time in
    seconds required for each method to produce a solution within a
    certain energy gap to the optimal solution. As the baseline method
    optimizes a piecewise linear approximation of the quadratic
    dataterm, it fails to reach that optimality gap even for $L = 256$
    (indicated by $t=\infty$). In contrast, while the proposed lifting
    method can solve a large class of non-convex problems, it is
    almost as efficient as direct methods on convex problems.}
  \label{fig:rof_compare}
\end{figure}
}
\newcommand{\figTruncatedROF}{
\begin{figure}[t!]
  \centering
  \captionsetup[subfloat]{labelformat=empty,justification=centering,singlelinecheck=false}
  \subfloat[][\scriptsize Input image $f$]{
    \includegraphics[width=0.115\textwidth]{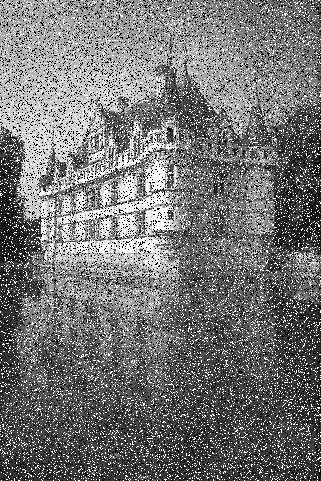}
  }
  \subfloat[][\scriptsize Proposed ($L=5$),\\$E=20494$, $t=14.6s$]{
    \includegraphics[width=0.115\textwidth]{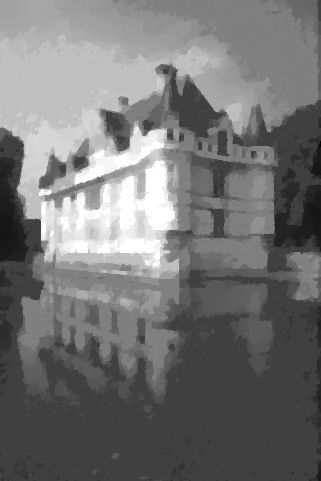}
  }
  \subfloat[][\scriptsize Proposed ($L=10$),\\$E=18844$, $t=30.5s$]{
    \includegraphics[width=0.115\textwidth]{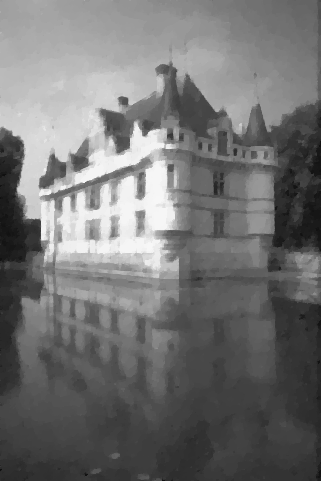}
  }
  \subfloat[][\scriptsize Proposed ($L=20$), \\$E=18699$, $t=123.9s$]{
    \includegraphics[width=0.115\textwidth]{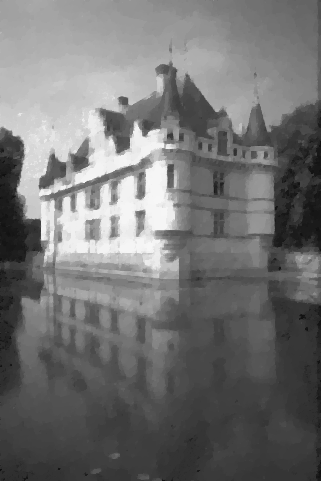}
  }\\[-0.24cm]
  \subfloat[][\scriptsize Baseline ($L=256$),\\ $E=18660$, $t=1001s$]{
    \includegraphics[width=0.115\textwidth]{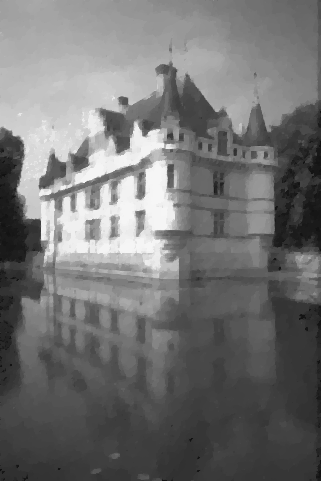}
  }
  \subfloat[][\scriptsize Baseline ($L=5$),\\ $E=23864$, $t=4.7s$]{
    \includegraphics[width=0.115\textwidth]{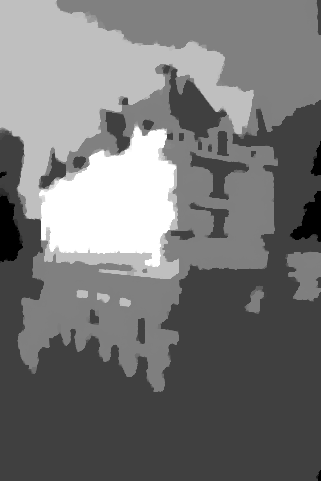}
  }
  \subfloat[][\scriptsize Baseline ($L=10$), \\ $E=19802$, $t=6.3s$]{
    \includegraphics[width=0.115\textwidth]{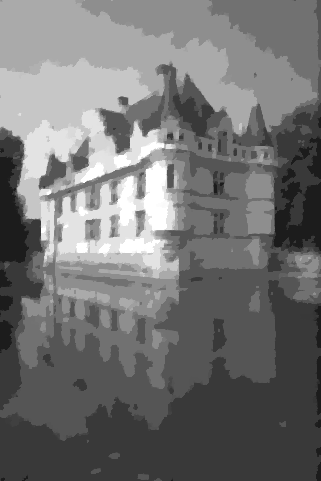}
  }
  \subfloat[][\scriptsize Baseline ($L=20$), \\ $E=18876$, $t=12.8s$]{
    \includegraphics[width=0.115\textwidth]{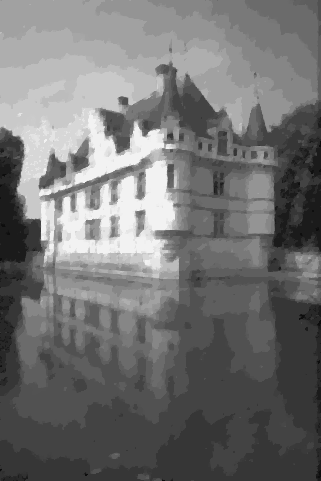}
  }\\[1mm]
  \caption{Denoising using a robust truncated quadratic dataterm. The top row shows the input image along with the result obtained by our approach for a varying number of labels $L$. The bottom row illustrates the results obtained by the baseline method \cite{PCBC-SIIMS}. The energy of the final solution as well as the total runtime are given below each image. }
  \label{fig:rof_robust}
\end{figure}
}
\newcommand{\figZachKohli}{
\begin{figure}
  \captionsetup[subfloat]{labelformat=empty,justification=centering,singlelinecheck=false}
  \subfloat[][\scriptsize $E=279394$]{
    \includegraphics[width=0.111\textwidth]{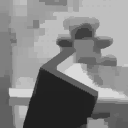} 
  }
  \subfloat[][\scriptsize $E=208432$]{
    \includegraphics[width=0.111\textwidth]{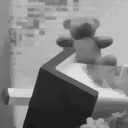} 
  }
  \subfloat[][\scriptsize $E=\textbf{196803}$]{
    \includegraphics[width=0.111\textwidth]{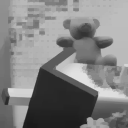} 
  }
  \subfloat[][\scriptsize $E=194855$]{
    \includegraphics[width=0.111\textwidth]{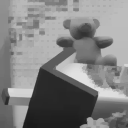} 
  }
  \\[-0.35cm]
  \subfloat[][\scriptsize $E=278108$]{
    \includegraphics[width=0.111\textwidth]{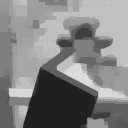} 
  }
  \subfloat[][\scriptsize $E=\textbf{208112}$]{
    \includegraphics[width=0.111\textwidth]{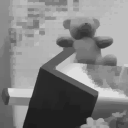} 
  }
  \subfloat[][\scriptsize $E=196810$]{
    \includegraphics[width=0.111\textwidth]{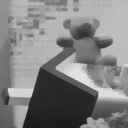} 
  }
  \subfloat[][\scriptsize $E=194845$]{
    \includegraphics[width=0.111\textwidth]{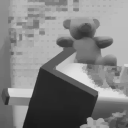} 
  }
  \\[-0.35cm]
  \subfloat[][\scriptsize $E=\textbf{277970}$]{
    \includegraphics[width=0.111\textwidth]{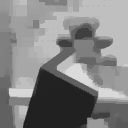} 
  }
  \subfloat[][\scriptsize $E=208493$]{
    \includegraphics[width=0.111\textwidth]{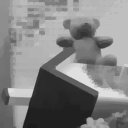} 
  }
  \subfloat[][\scriptsize $E=196979$]{
    \includegraphics[width=0.111\textwidth]{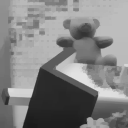} 
  }
  \subfloat[][\scriptsize $E=\textbf{194836}$]{
    \includegraphics[width=0.111\textwidth]{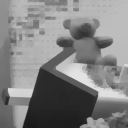} 
  }\\[0.01cm]
  \caption{Comparison to the MRF approach presented in
    \cite{Zach-Kohli-eccv12}. The first row shows DC-Linear, second
    row DC-MRF and third row our results for $4$, $8$, $16$ and $32$
    convex pieces on the truncated quadratic energy \eqref{eq:cont_rof_robust}. Below the figures we show the final nonconvex
    energy. We achieve competitive results while using a more 
    compact representation and generalizing to isotropic regularizers. \vspace{-0.4cm}}
  \label{fig:zach_compare}
\end{figure}

}
\newcommand{\figStereo}{
\begin{figure*}[t!]
  \centering
  \captionsetup[subfloat]{labelformat=empty,justification=centering,singlelinecheck=false,margin=0pt}
  \subfloat[\scriptsize One of the input images]{
    \includegraphics[width=0.156\textwidth]{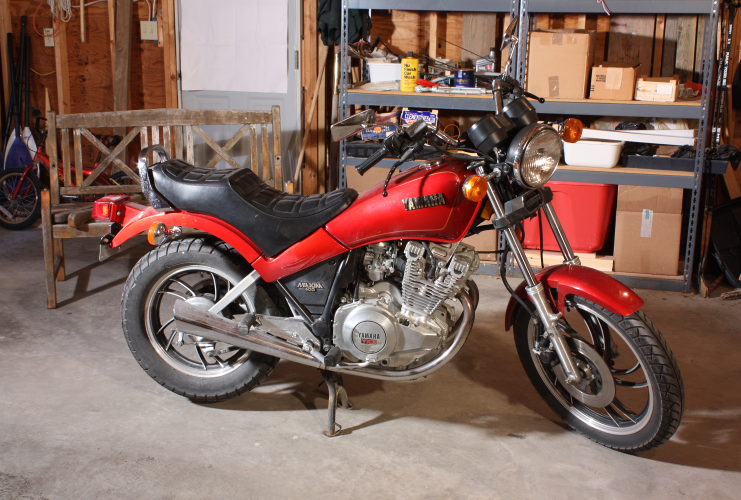}
  }
  \subfloat[\scriptsize Proposed ($L=2$)]{
    \includegraphics[width=0.156\textwidth]{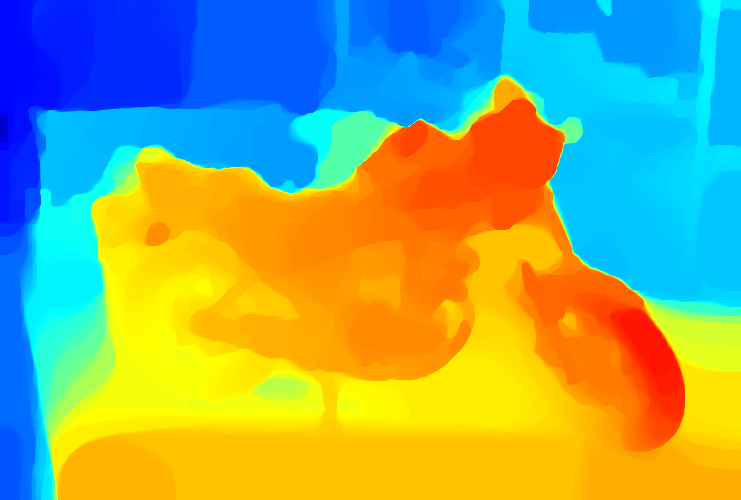}
  }
  \subfloat[\scriptsize Proposed ($L=4$)]{
    \includegraphics[width=0.156\textwidth]{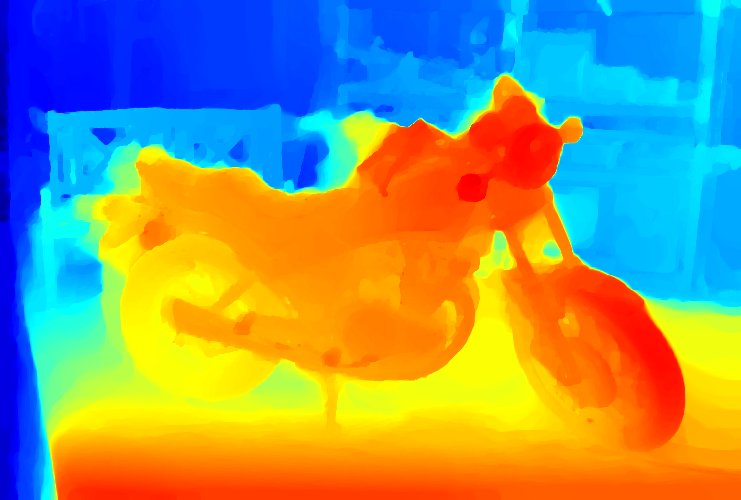}
  }
  \subfloat[\scriptsize Proposed ($L=8$)]{
    \includegraphics[width=0.156\textwidth]{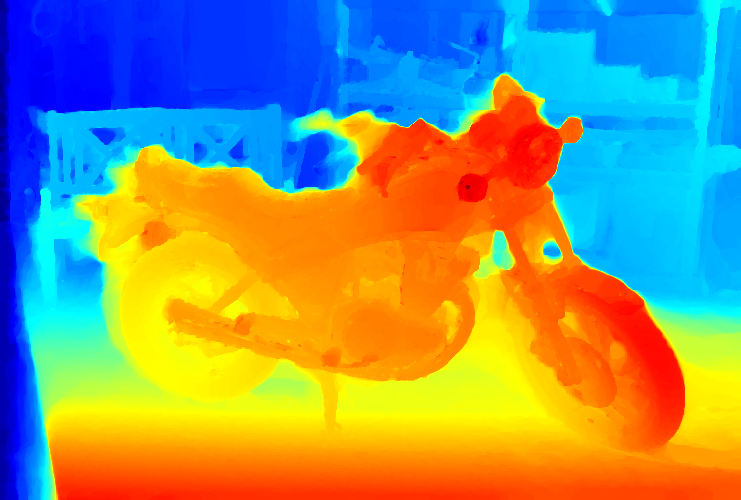}
  }
  \subfloat[\scriptsize Proposed ($L=16$)]{
    \includegraphics[width=0.156\textwidth]{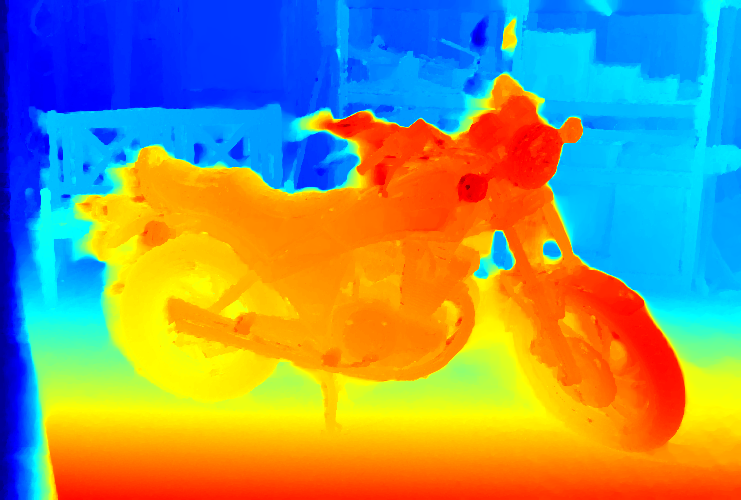}
  }
  \subfloat[\scriptsize Proposed ($L=32$)]{
    \includegraphics[width=0.156\textwidth]{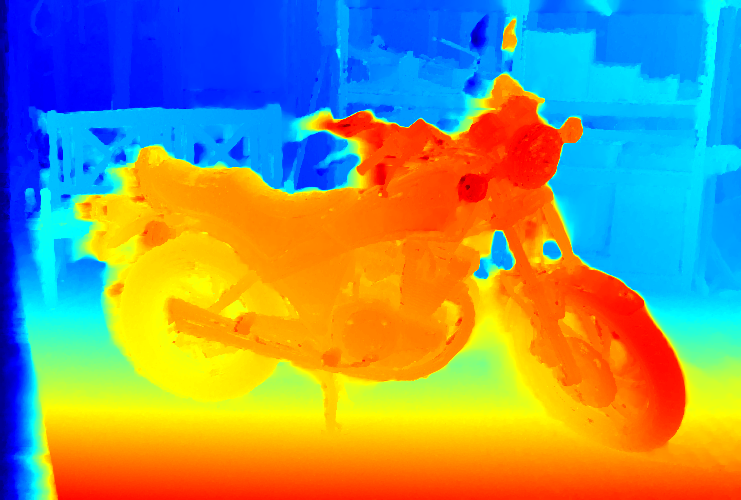}
  }\\[-0.23cm]
  \subfloat[\scriptsize Baseline ($L=270$)]{
    \includegraphics[width=0.156\textwidth]{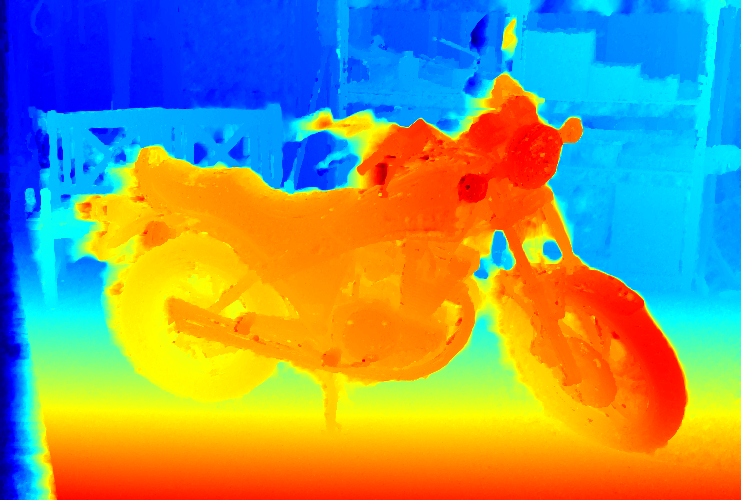}
  }
  \subfloat[\scriptsize Baseline ($L=2$)]{
    \includegraphics[width=0.156\textwidth]{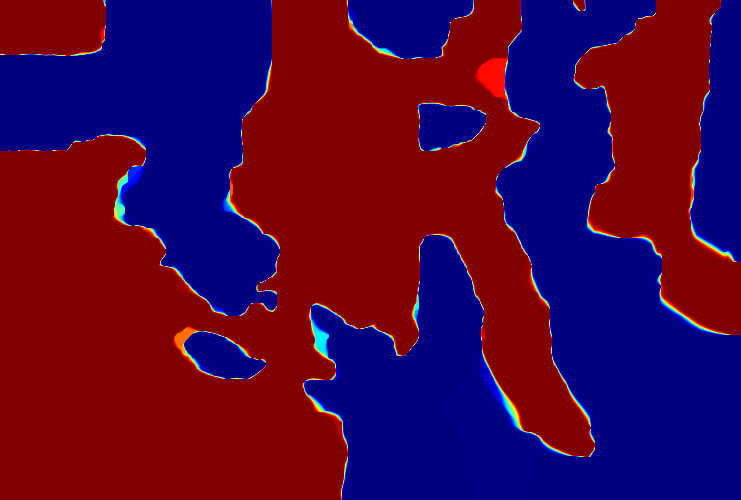}
  }
  \subfloat[\scriptsize Baseline ($L=4$)]{
    \includegraphics[width=0.156\textwidth]{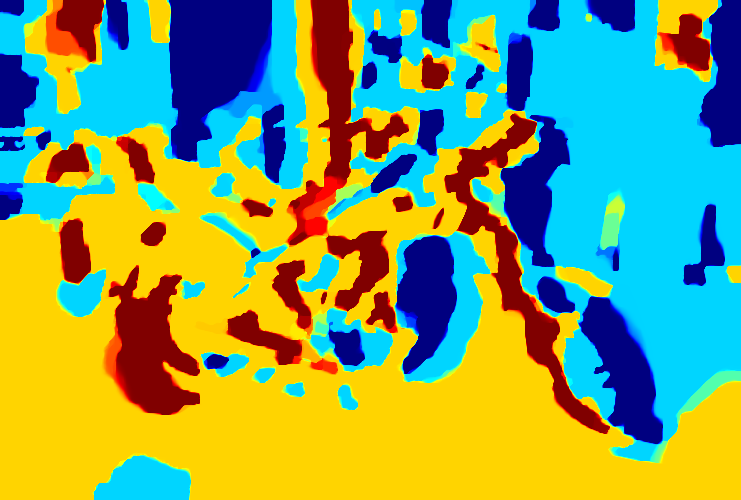}
  }
  \subfloat[\scriptsize Baseline ($L=8$)]{
    \includegraphics[width=0.156\textwidth]{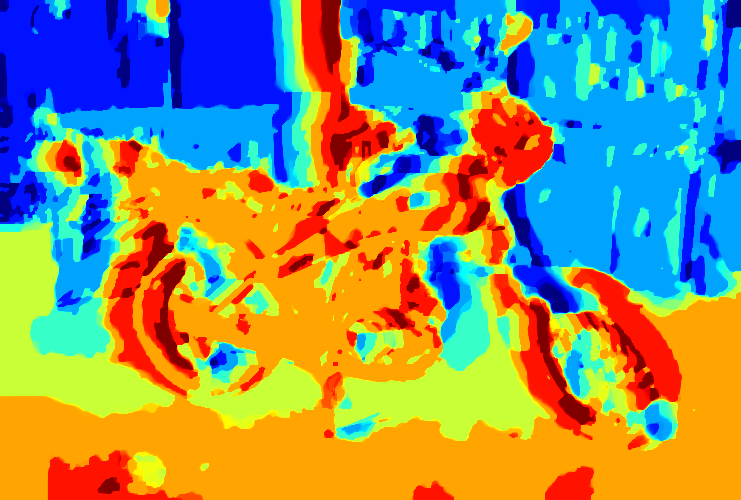}
  }
  \subfloat[\scriptsize Baseline ($L=16$)]{
    \includegraphics[width=0.156\textwidth]{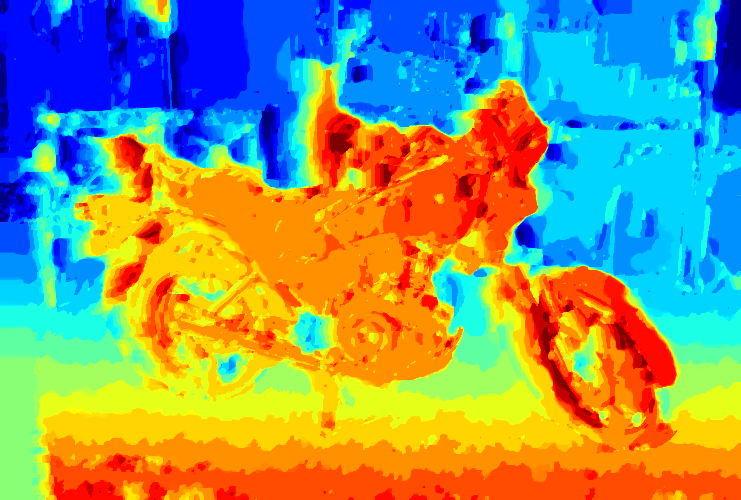}
  }
  \subfloat[\scriptsize Baseline ($L=32$)]{
    \includegraphics[width=0.156\textwidth]{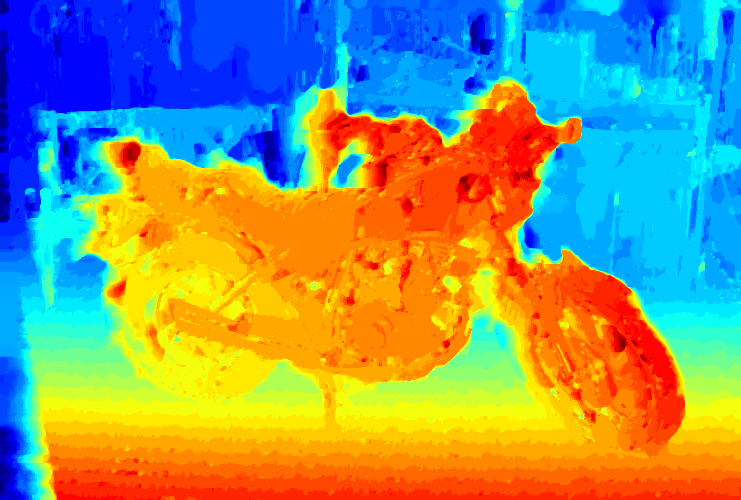}
  }\\[1mm]
  \caption{Stereo comparison. We compare the proposed method to the baseline method
     on the example of stereo matching.
    The first column shows one of the two input images and below the
    baseline method with the full number of labels. The
    proposed relaxation requires much fewer labels to reach a smooth
    depth map. Even for $L=32$, the label space discretization of the 
    baseline method is strongly visible, while the proposed method 
    yields a smooth result already for $L=8$.}
  \label{fig:stereo_compare}
\end{figure*}
}
\newcommand{\figAniso}{
\begin{figure}[t!]
  \centering
  \subfloat[][Anisotropic Regularization]{
    \includegraphics[width=0.23\textwidth]{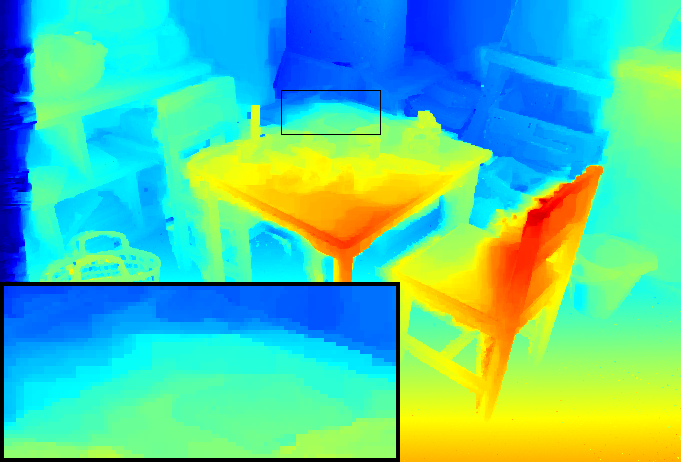} 
    \label{fig:aniso}
  }
  \subfloat[][Isotropic Regularization]{
    \includegraphics[width=0.23\textwidth]{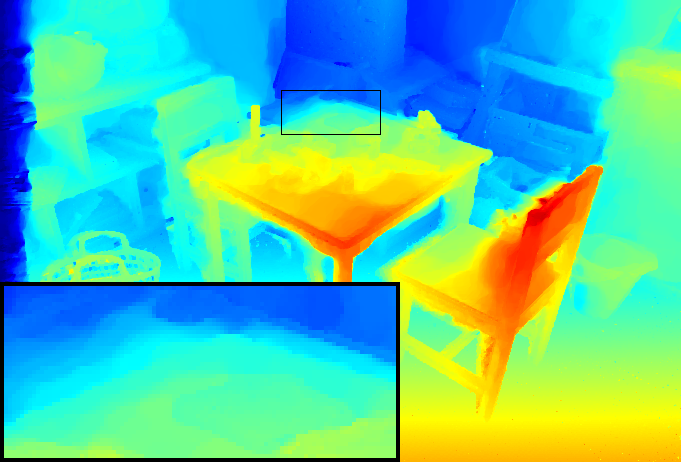} 
    \label{fig:iso}
  }\\[1mm]
  \caption{
    We compare the proposed relaxation
    with anistropic regularizer to isotropic
    regularization on the stereo matching example. Using an 
    anisotropic formulation as in~\cite{Zach-Kohli-eccv12} leads to grid bias.}
\end{figure}
}
\newcommand{\figDFF}{
\begin{figure*}[t!]
  \centering
  \captionsetup[subfloat]{labelformat=empty,justification=centering,singlelinecheck=false,margin=0pt}
  \subfloat[\scriptsize One of the input images]{
    \includegraphics[width=0.156\textwidth]{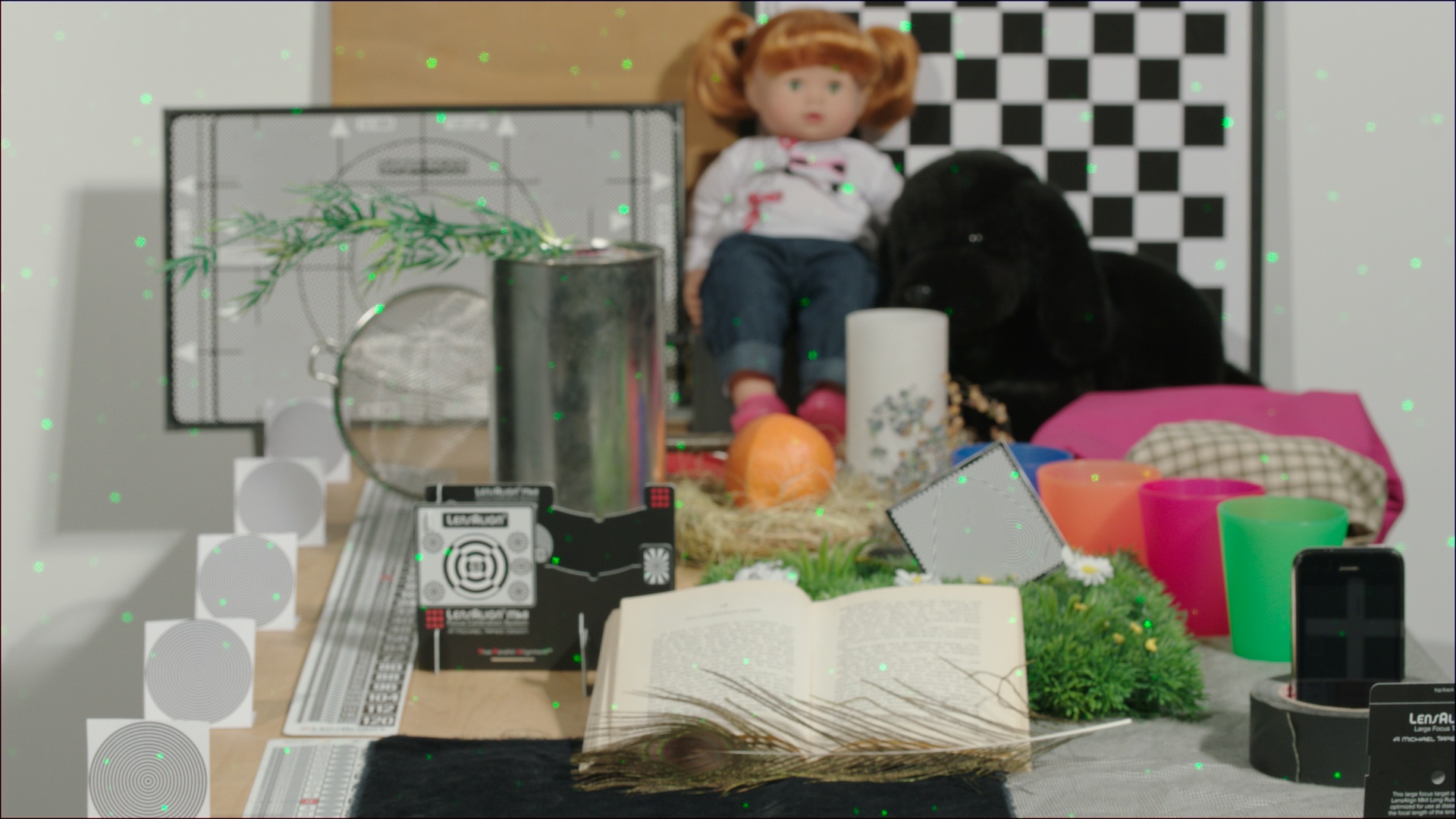}
  }
  \subfloat[\scriptsize Proposed ($L=2$)]{
    \includegraphics[width=0.156\textwidth]{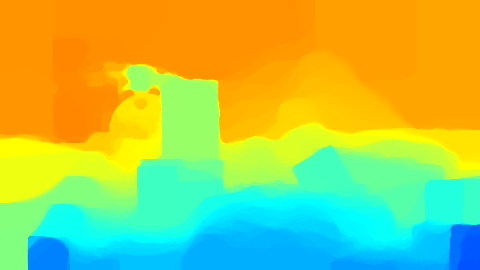}
  }
  \subfloat[\scriptsize Proposed ($L=4$)]{
    \includegraphics[width=0.156\textwidth]{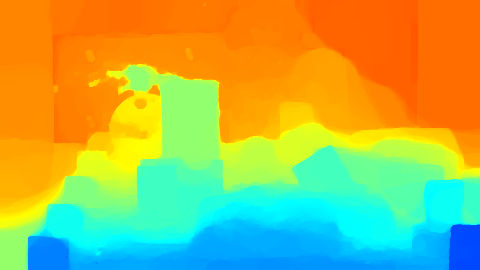}
  }
  \subfloat[\scriptsize Proposed ($L=8$)]{
    \includegraphics[width=0.156\textwidth]{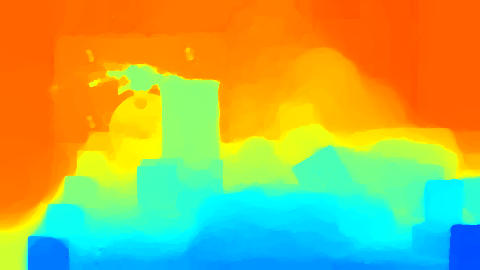}
  }
  \subfloat[\scriptsize Proposed ($L=16$)]{
    \includegraphics[width=0.156\textwidth]{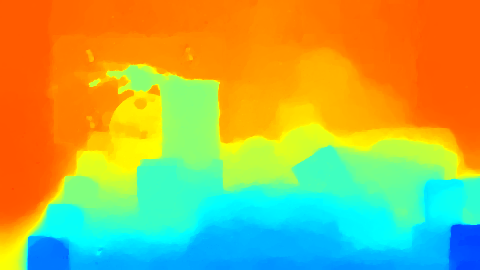}
  }
  \subfloat[\scriptsize Proposed ($L=32$)]{
    \includegraphics[width=0.156\textwidth]{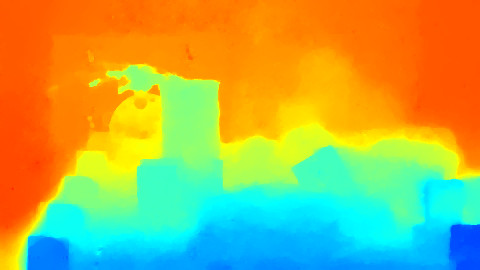}
  }\\[-0.24cm]
  \subfloat[\scriptsize Baseline ($L=374$)]{
    \includegraphics[width=0.156\textwidth]{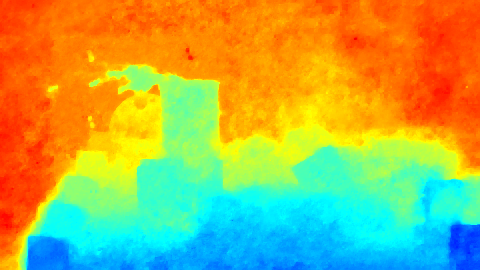}
  }
  \subfloat[\scriptsize Baseline ($L=2$)]{
    \includegraphics[width=0.156\textwidth]{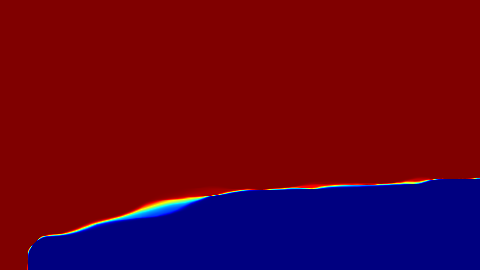}
  }
  \subfloat[\scriptsize Baseline ($L=4$)]{
    \includegraphics[width=0.156\textwidth]{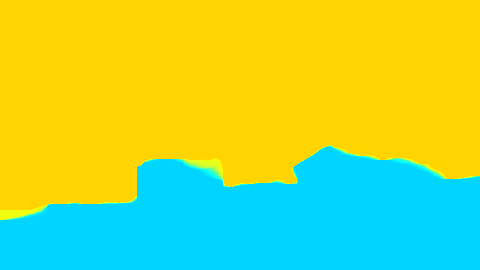}
  }
  \subfloat[\scriptsize Baseline ($L=8$)]{
    \includegraphics[width=0.156\textwidth]{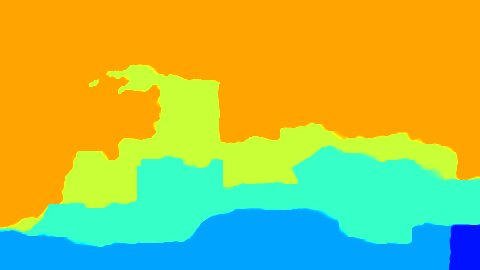}
  }
  \subfloat[\scriptsize Baseline ($L=16$)]{
    \includegraphics[width=0.156\textwidth]{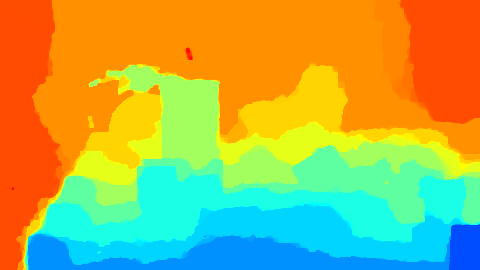}
  }
  \subfloat[\scriptsize Baseline ($L=32$)]{
    \includegraphics[width=0.156\textwidth]{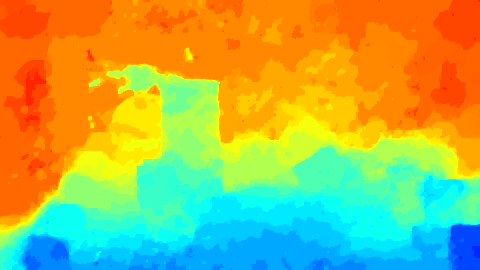}
  }\\[1mm]
  \caption{Depth from focus comparison. We compare our
    method to the baseline approach on the problem of depth from focus. First column: one of the 374 differently
    focused input images and the baseline method for full number of
    labels. Following columns: proposed relaxation (top row)
    vs. baseline (bottom row) for 2, 4, 8, 16 and 32 labels each.}
  \label{fig:dff_compare}
\end{figure*}
}
\newcommand{\figLifting}{
  \begin{figure}[t]
    \centering
    \includegraphics{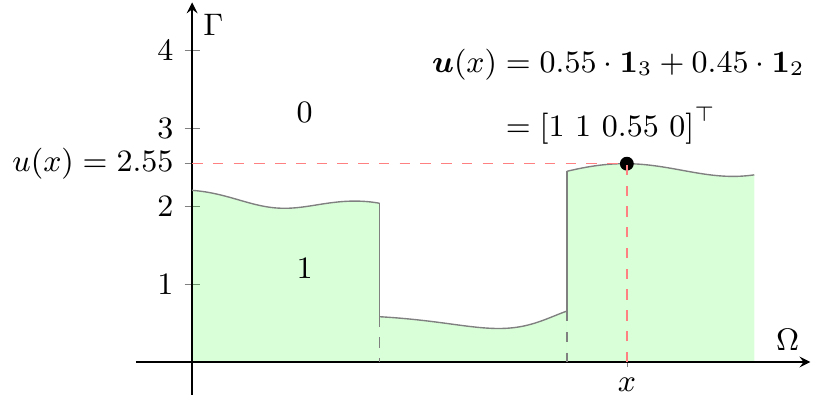}
    \caption{Lifted representation. Instead of optimizing over the function $u : \Omega \rightarrow \Gamma$, we optimize
      over all possible graph functions $\ul : \Omega \rightarrow \bbR^k$ (in this example $k=4$). The central idea behind
      our approach is the finite dimensional representation of $\ul$ at
      every point $x \in \Omega$. \vspace{-0.25cm}}
    \label{fig:graph_intuition}
  \end{figure}
}
\newcommand{\figEpi}{
  \begin{figure}[t!]
    \centering
    \captionsetup[subfloat]{justification=centering,singlelinecheck=false}
    \subfloat[][]{
      \includegraphics{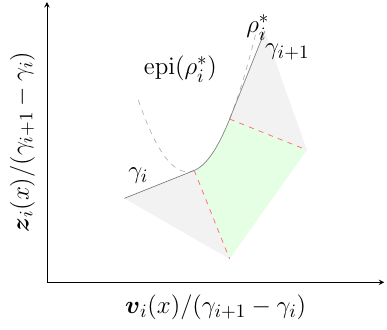}
    }
    \subfloat[][]{
      \includegraphics{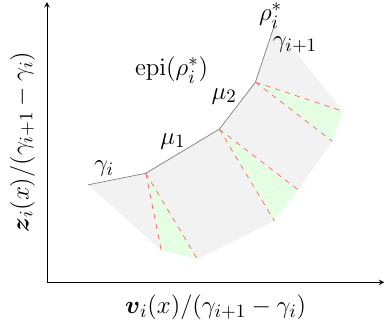}
    }
    \caption{Illustration of the epigraph projection. In the left subfigure the projection onto the epigraph of the conjugate of a convex quadratic $\rho_i$ is shown. In the right subfigure the piecewise linear case is illustrated. In the both cases all points that lie in the gray sets are orthogonally projected onto the respective linear parts whereas the points that lie in the green sets are projected onto the parabolic part (in the quadratic case) respectively the kinks (in the piecewise linear case). In the piecewise linear case the green sets are normal cones. The red dashed lines correspond to the boundary cases. $\gamma_i$, $\gamma_{i+1}$, $\mu_1$, $\mu_2$ are the slopes of the segments of $\rho_i^*$ respectively the (sub-)label positions of $\rho_i$.}
    \label{fig:epigraph_projection}
  \end{figure}
}
\ifcvprfinal\pagestyle{empty}\fi
\begin{document}

\title{Sublabel--Accurate Relaxation of Nonconvex Energies}

\newcommand*\samethanks[1][\value{footnote}]{\footnotemark[#1]}
\author[1]{Thomas M\"ollenhoff \thanks{Those authors contributed equally.}}
\author[1]{Emanuel Laude \samethanks}
\author[1]{Michael Moeller}
\author[2]{Jan Lellmann}
\author[1]{Daniel Cremers}
\affil[1]{Technical University of Munich,}
\affil[2]{University of L\"ubeck}

\renewcommand\Authands{ and }

\maketitle

\begin{abstract}

  We propose a novel spatially continuous framework for convex
  relaxations based on functional lifting. Our method can be
  interpreted as a sublabel--accurate solution to multilabel problems. 
  We show that previously proposed functional
  lifting methods optimize an energy which is linear between two
  labels and hence require (often infinitely) many labels for a
  faithful approximation.  In contrast, the proposed formulation is
  based on a piecewise convex approximation and therefore needs far
  fewer labels -- see Fig.~\ref{fig:teaser}. In comparison to recent
  MRF-based approaches, our method is formulated in a spatially
  continuous setting and shows less grid bias.  
  Moreover, in a local sense, our formulation is the tightest possible
  convex relaxation.  It is easy to implement and allows an efficient
  primal-dual optimization on GPUs. We show the effectiveness
  of our approach on several computer vision problems.

\end{abstract}


\section{Introduction}
Energy minimization methods have become the central paradigm for
solving practical problems in computer vision. The energy functional
can often be written as the sum of a data fidelity and a
regularization term. One of the most popular regularizers is the total
variation ($\TV$) due to its many favorable properties
\cite{ChambolleTV-2010}. Hence, an important class of optimization
problems is given as
\begin{equation}
  \underset{u : \Omega \rightarrow \Gamma} \min ~ \int_{\Omega}
  \rho(x, u(x)) ~ \mathrm{d}x + \lambda ~ \TV(u),
  \label{eq:general_energy}
\end{equation}
defined for functions $u $ with finite total variation, arbitrary,
possibly nonconvex dataterms $\rho : \Omega \times \Gamma \rightarrow
\bbR$, label spaces $\Gamma$ which are closed intervals in $\bbR$,
$\Omega \subset \bbR^d$, and $\lambda \in \bbR^+$. The multilabel
interpretation of the dataterm is that $\rho(x,u(x))$ represents the
costs of assigning label $u(x)$ to point $x$. For (weakly)
differentiable functions $TV(u)$ equals the integral over the norm of
the derivative, and therefore favors a spatially coherent label
configuration. The difficultly of minimizing the nonconvex energy
\eqref{eq:general_energy} has motivated researchers to develop convex
reformulations.

\begin{figure}
  \captionsetup[subfloat]{labelformat=empty,justification=centering,singlelinecheck=false,margin=0pt}
  \subfloat[][\scriptsize Pock \emph{et al.} \cite{PCBC-SIIMS}, 48 labels, $1.49$ GB, $52s$.]{
    \includegraphics[width=0.23\textwidth]{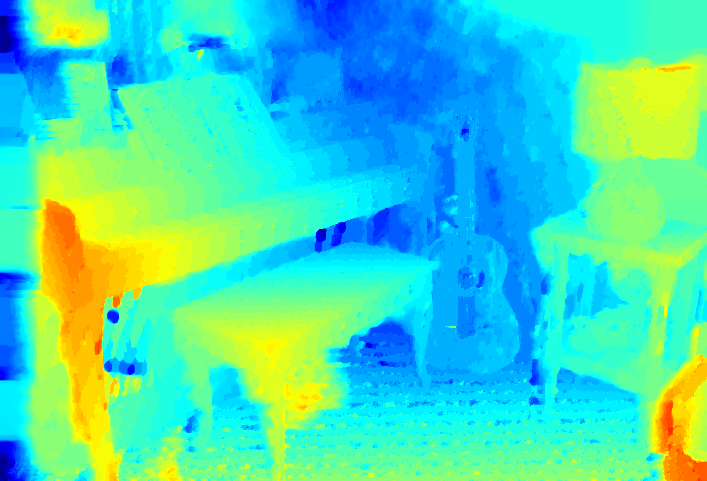}
  }
  \subfloat[][\scriptsize Proposed, 8 labels, $0.49$ GB, $30s$.]{
    \includegraphics[width=0.23\textwidth]{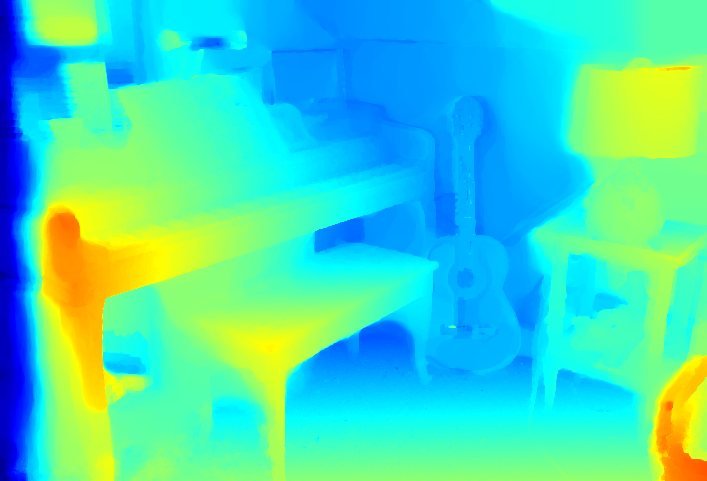}
  }\\[1mm]
  
  \caption{\label{fig:teaser} We propose a convex relaxation for the
    variational model \eqref{eq:general_energy}, which opposed to
    existing functional lifting methods \cite{PCBC-SIIMS,PockECCV}
    allows continuous label spaces \emph{even after} discretization.
    Our method (here applied to stereo matching) avoids label
    space discretization artifacts, while saving on memory and
    runtime.}
\end{figure}

Convex representations of \eqref{eq:general_energy} and more general
related energies have been studied in the context of the
calibration method for the Mumford-Shah functional
\cite{Alberti-et-al-03}. Based on these works, relaxations for
the piecewise constant \cite{Pock-et-al-cvpr09} and piecewise smooth
Mumford-Shah functional \cite{PCBC-ICCV09} have been proposed.
Inspired by Ishikawa's graph-theoretic globally optimal solution to
discrete variants of \eqref{eq:general_energy}, continuous analogues
have been considered by Pock \etal in \cite{PCBC-SIIMS,PockECCV}.
Continuous relaxations for multilabeling problems with finite label
spaces $\Gamma$ have also been studied in~\cite{Lellmann-Schnoerr-siims11}.

Interestingly, the discretization of the aforementioned continuous
relaxations is very similar to the linear programming relaxations
proposed for MAP inference in the Markov Random Field (MRF) community
\cite{Ishikawa,Schlesinger76,Werner-tpami2007,Zach-et-al-cvpr12}.
Both approaches ultimately discretize the range $\Gamma$ into a finite
set of labels. A closer analysis of these relaxations reveals,
however, that they are not well-suited to represent the 
continuous valued range that we face in most computer vision problems
such as stereo matching or optical flow.  More specifically, the above
relaxations are not designed to assign meaningful cost values to
non-integral configurations. As a result, a large number of labels is
required to achieve a faithful approximation.  Solving real-world
vision problems therefore entails large optimization problems with
high memory and runtime requirement.  To address this problem, Zach
and Kohli \cite{Zach-Kohli-eccv12}, Zach \cite{Zach-aistats13} and Fix
and Agarwal \cite{Fix-eccv14} introduced MRF-based approaches which
retain continuous label spaces after discretization. For
manifold-valued labels, this issue was addressed by Lellmann~\etal
\cite{lellmann-et-al-iccv2013}, however with the sole focus on the
regularizer.

\subsection{Contributions}
We propose the first sublabel--accurate convex relaxation
of nonconvex problems in a spatially continuous setting.  It exhibits several favorable
properties:\\[-6mm]

\begin{itemize}
\setlength{\itemsep}{1mm}
\item In contrast to existing spatially continuous lifting
  approaches \cite{PCBC-SIIMS,PockECCV}, the proposed method provides
  substantially better solutions with far fewer labels -- see
  Fig.~\ref{fig:teaser}.  This provides savings in runtime and
  memory.

  \item In Sec.~\ref{sec:lifting} we show that the functional
    lifting methods \cite{PCBC-SIIMS,PockECCV} are a special case of
    the proposed framework. 
    
 \item In Sec.~\ref{sec:lifting} we show that, in a local sense, our formulation is the tightest convex relaxation which takes dataterm and regularizer into
  account separately. It is unknown whether this
  ``local convex envelope'' property also holds for the discrete
  approach~\cite{Zach-Kohli-eccv12}.

\item Our formulation is compact and requires only half
  the amount of variables for the dataterm than the formulation
   in \cite{Zach-Kohli-eccv12}. We prove that the sublabel--accurate total variation can be represented in a very simple way,
  introducing no overhead compared to \cite{PCBC-SIIMS,PockECCV}.  In
  contrast, the regularizer in \cite{Zach-Kohli-eccv12} is much more
  involved.

\item Since our method is derived in a spatially continuous setting,
  the proposed approach easily allows different gradient
  discretizations. In contrast to
  \cite{Zach-aistats13,Zach-Kohli-eccv12} the regularizer is
  isotropic leading to noticeably less grid bias.
\end{itemize}



\section{Notation and Mathematical Preliminaries}
We make heavy use of the convex conjugate, which is given as $f^*(y) =
\sup_{x \in \bbR^n} ~ \iprod{y}{x} - f(x)$ for functions $f : \bbR^n
\rightarrow \bbR \cup \{ \infty \}$. The biconjugate $f^{**}$ denotes
its \emph{convex envelope}, i.e.~the largest lower-semicontinuous
convex under-approximation of $f$.  For a set $C$ we denote by
$\delta_C$ the function which maps any element from $C$ to $0$ and is
$\infty$ otherwise.  For a comprehensive introduction to convex
analysis, we refer the reader to \cite{Rockafellar:ConvexAnalysis}.
Vector valued functions $\ul : \Omega \to \bbR^k$ are written in bold
symbols. If it is clear from the context, we will drop the $x \in
\Omega$ inside the functions, e.g., we write $\rho(u)$ for $\rho(x,
u(x))$.

\section{Functional Lifting}
\figLifting

\begin{figure*}[t!]
  \subfloat[][]{
    \newlength\fheight 
    \newlength\fwidth 
    \setlength\fheight{1.25cm} 
    \setlength\fwidth{8.3cm}
    {\scalefont{0.8}
    \input{stereo_standard_energy.tex}
    }
    \label{fig:standard_energy}
  }
  \hspace{-0.7cm}
  \subfloat[][]{
    \setlength\fheight{1.25cm} 
    \setlength\fwidth{8.3cm}
    {\scalefont{0.8}
    \input{stereo_precise_energy.tex}
    }
    \label{fig:precise_energy}
  }\\[1mm]
   \vspace{-0.3cm}
  \caption{
    We show the nonconvex energy $\rho(u)$ at a fixed point $x \in \Omega$ (red
    dashed line in both plots) from the stereo matching experiment in
    Fig.~\ref{fig:stereo_compare} over the full range
    of $270$ disparities. The black dots indicate the positions of the labels and the black curves show the approximations used by the respective methods. Fig.~\ref{fig:standard_energy}: The baseline
    lifting method \cite{PCBC-SIIMS} uses a piecewise linear approximation with labels as nodes.
    Fig.~\ref{fig:precise_energy}: The proposed method uses an optimal piecewise convex approximation. As we can see, the piecewise convex approximation is closer to the original nonconvex energy and therefore more accurate.  }
  \label{fig:stereo_energy}
\end{figure*}

\label{sec:lifting}
To derive a convex representation of \eqref{eq:general_energy}, we rely on the framework of functional
lifting. The idea is to reformulate the
optimization problem in a higher dimensional space, in which the
convex envelope approximates the nonconvex energy better
than the one of the original low dimensional energy. 
We start by sampling the range $\Gamma$ at $L = k+1$ labels
$\gamma_1 <\hdots <\gamma_L \in \Gamma$. This partitions
the range into $k$ intervals $\Gamma_i = [\gamma_i, \gamma_{i+1}]$ so
that $\Gamma = \Gamma_1 \cup \hdots \cup \Gamma_k$. Clearly, any value
in the range of $u : \Omega \to \Gamma$ can be written as 
\begin{equation}
u(x) = \gamma_i^\alpha := \gamma_i + \alpha (\gamma_{i+1} - \gamma_i),
\end{equation}
 for $\alpha \in [0,1]$ and some label index $1 \leq i \leq k$. We represent such a value in the range $\Gamma$ by a $k$-dimensional vector
\begin{equation}
  \boldsymbol u(x)=\Gr_i^{\alpha} := \alpha \Gr_i + (1 - \alpha) \Gr_{i-1},
\end{equation}
where $\Gr_i$ denotes a vector starting with $i$ ones followed by $k -
i$ zeros. We call $\ul : \Omega \to \bbR^k$ the \emph{lifted} representation of $u$, representing the graph of $u$. This notation is depicted in Fig.~\ref{fig:graph_intuition} for $k = 4$. 
Back-projecting the lifted
$\boldsymbol u(x)$ to the range of $u$ using the layer cake formula yields a one-to-one correspondence between $u(x) = \gamma_i^\alpha$ and $\boldsymbol u(x) = \Gr_i^{\alpha}$ via
\begin{equation}
  u(x)= \gamma_1 + \sum_{i=1}^k {\boldsymbol u}_i(x) (\gamma_{i+1} - \gamma_i).
\end{equation}
We now formulate problem \eqref{eq:general_energy} in terms of such
graph functions, a technique that is common in the theory of Cartesian
currents \cite{GMS-CC}.

\subsection{Convexification of the Dataterm}
For now, we consider a fixed $x \in \Omega$. Then the dataterm from \eqref{eq:general_energy}
is a possibly nonconvex real-valued function (cf. Fig.~\ref{fig:stereo_energy}) that we seek to minimize over a compact interval $\Gamma$:
\begin{equation}
  \underset{u \in \Gamma} \min ~ \rho(u).
  \label{eq:dataterm_min_simple}
\end{equation}
Due to the one-to-one correspondence between $\gamma_i^\alpha$ and $\Gr_i^{\alpha}$ it is clear that solving problem \eqref{eq:dataterm_min_simple} is equivalent to finding a
minimizer of the lifted energy: 
\begin{equation}
  \dat(\ul) = \underset{1 \leq i \leq k} \min ~ \dat_i(\ul),
  \label{eq:dataterm_lifted}
\end{equation}
\begin{equation}
  \dat_i(\ul) = 
  \begin{cases}
    \rho(\gamma_i^\alpha), \qquad &\text{if } ~ \ul = \Gr_i^{\alpha}, ~ \alpha \in [0, 1],\\
    \infty, & \text{else.}
  \end{cases}
  \label{eq:dataterm_lifted_part}
\end{equation}
Note that the constraint in \eqref{eq:dataterm_lifted_part} is essentially the nonconvex special ordered set of type 2 (SOS2) constraint \cite{Beale-Tomlin-1970}. More precisely, we demand that the derivative $\partial_{\gamma} \ul$ is zero, except for two neighboring elements, which add up to one. 
In the following proposition, we derive the tightest convex relaxation of $\dat$.
\begin{prop}
  The convex envelope of \eqref{eq:dataterm_lifted} is given as:
  \begin{equation} \label{eq:dataterm_sublabel_biconj}
    \dat^{**}(\ul) = \underset{\vl \in \bbR^k} \sup ~ \iprod{\ul}{\vl} - \underset{1 \leq i \leq k} \max ~ \dat_i^*(\vl), 
  \end{equation}
  where the conjugate of the individual $\dat_i$ is
  \begin{equation}
      \dat_i^*(\vl) = c_i(\vl) + \rho_i^*\left(\frac{\vl_i}{\gamma_{i+1} - \gamma_i}\right),
      \label{eq:dataterm_optimize}
  \end{equation}
  with $c_i(\vl) = \iprod{\Gr_{i-1}}{\vl} - \frac{\gamma_i}{\gamma_{i+1} - \gamma_i}$ and $\rho_i=\rho + \delta_{\Gamma_i}$. 
\end{prop}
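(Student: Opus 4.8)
The plan is to reduce the proposition to two standard facts about conjugation, exploiting that, by definition, $\dat^{**} = (\dat^*)^*$ and that the right-hand side of \eqref{eq:dataterm_sublabel_biconj} is precisely the conjugate of $\vl \mapsto \max_{1\leq i\leq k}\dat_i^*(\vl)$. It therefore suffices to prove (i) that $\dat^* = \max_{1\leq i\leq k}\dat_i^*$ and (ii) the closed form \eqref{eq:dataterm_optimize} for each individual conjugate $\dat_i^*$. Assertion (i) is the rule that conjugation sends a pointwise infimum to a pointwise supremum: for every $\vl\in\bbR^k$,
\begin{align*}
  \dat^*(\vl)
  &= \sup_{\ul}\ \iprod{\ul}{\vl} - \min_{1\leq i\leq k}\dat_i(\ul)\\
  &= \max_{1\leq i\leq k}\ \sup_{\ul}\ \iprod{\ul}{\vl} - \dat_i(\ul)
  = \max_{1\leq i\leq k}\dat_i^*(\vl),
\end{align*}
where exchanging the suprema is unconditional and the supremum over the finite index set is attained. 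Substituting this into $\dat^{**} = (\dat^*)^*$ immediately yields the outer structure of \eqref{eq:dataterm_sublabel_biconj}; it coincides with the convex envelope of $\dat$ under the mild assumption (e.g.\ $\rho(x,\cdot)$ lower semicontinuous on the compact interval $\Gamma$) that makes $\dat$ proper and bounded below, hence minorized by an affine function.

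For assertion (ii) I would parametrize the one-dimensional effective domain of $\dat_i$. By \eqref{eq:dataterm_lifted_part}, $\dat_i$ is finite only on the segment $\{\Gr_i^\alpha : \alpha\in[0,1]\}$, and since $\Gr_i-\Gr_{i-1}=e_i$ we have $\Gr_i^\alpha = \Gr_{i-1}+\alpha\,e_i$, so $\iprod{\Gr_i^\alpha}{\vl} = \iprod{\Gr_{i-1}}{\vl} + \alpha\,\vl_i$. Hence
\[
  \dat_i^*(\vl) = \iprod{\Gr_{i-1}}{\vl} + \sup_{\alpha\in[0,1]}\ \alpha\,\vl_i - \rho(\gamma_i^\alpha).
\]
Changing variables to $t=\gamma_i^\alpha\in\Gamma_i$, i.e.\ $\alpha = (t-\gamma_i)/(\gamma_{i+1}-\gamma_i)$, turns the remaining supremum into $\sup_{t\in\Gamma_i} \tfrac{\vl_i}{\gamma_{i+1}-\gamma_i}\,t - \rho(t)$ up to a term affine in $\vl$; and since restricting the supremum to $\Gamma_i$ amounts to adding $\delta_{\Gamma_i}$ to $\rho$, this supremum is exactly $\rho_i^*\!\big(\tfrac{\vl_i}{\gamma_{i+1}-\gamma_i}\big)$ with $\rho_i = \rho + \delta_{\Gamma_i}$. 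Collecting the affine-in-$\vl$ remainder into $c_i(\vl)$ then yields \eqref{eq:dataterm_optimize}.

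I do not expect a serious obstacle: the proof is essentially bookkeeping. The details requiring care are (a) matching closures so that no value is lost at the shared endpoints $\Gr_i$ of consecutive segments -- this is why $\rho_i$ carries the indicator of the \emph{closed} interval $\Gamma_i$, mirroring the closed range $\alpha\in[0,1]$ in \eqref{eq:dataterm_lifted_part}; (b) confirming that $\dat$ is proper with an affine minorant, so that its biconjugate is indeed the convex envelope; and (c) tracking the affine change of variables $\alpha\leftrightarrow t$ so that the argument of $\rho_i^*$ comes out rescaled by $1/(\gamma_{i+1}-\gamma_i)$ and the leftover constant is absorbed correctly into $c_i$. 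Note that convexity of the individual $\dat_i$ is never needed, since $\dat_i^* = (\dat_i^{**})^*$ in any case and the conjugate-of-minimum identity in (i) holds verbatim for arbitrary functions.
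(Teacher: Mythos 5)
Your proposal is correct and follows essentially the same route as the paper's own proof: the identity $\dat^* = \max_i \dat_i^*$ for the conjugate of a pointwise minimum, followed by the affine change of variables $\alpha \leftrightarrow \gamma_i^\alpha$ on the segment $\{\Gr_i^\alpha : \alpha \in [0,1]\}$ to express each $\dat_i^*$ through $\rho_i^* = (\rho + \delta_{\Gamma_i})^*$. As a minor remark, your bookkeeping also makes visible that the affine remainder is $c_i(\vl) = \iprod{\Gr_{i-1}}{\vl} - \frac{\gamma_i}{\gamma_{i+1}-\gamma_i}\vl_i$, i.e.\ the last term carries a factor $\vl_i$ that is dropped in the proposition's displayed formula for $c_i$ (but present in the appendix calculation).
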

\begin{proof}
  See appendix.
\end{proof}
The above proposition reveals that the convex relaxation implicitly
convexifies the dataterm $\rho$ on each interval $\Gamma_i$. The equality $\rho_i^*=\rho_i^{***}$ implies that starting with $\rho_i$ yields exactly the same convex relaxation as starting with $\rho_i^{**}$. 
\begin{cor}\label{prop:piecewiseLinear}
If $\rho$ is linear on each $\Gamma_i$,
then the convex envelopes of $\dat(\ul)$ and $\dats(\ul)$ coincide, where the latter is:
\begin{equation}
\dats(\ul) =
 \begin{cases}
    \rho(\gamma_i^\alpha), \qquad &\text{if } ~\exists i: \; \ul = \Gr_i^{\alpha}, ~ \alpha \in \{0, 1\},\\
    \infty, & \text{else.}
  \end{cases}
\end{equation}
\end{cor}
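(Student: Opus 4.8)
The plan is to prove the stronger statement that the convex conjugates coincide, $\dat\fc=\dats\fc$; since $\dat\fcc=(\dat\fc)\fc$ and likewise for $\dats$, this immediately yields $\dat\fcc=\dats\fcc$, i.e.\ that the two convex envelopes agree. (One inclusion, $\dat\fcc\leq\dats\fcc$, is free: the domain of $\dats$ is the vertex set of the staircase, the domain of $\dat$ is the union of the edges, and on the vertices the two functions take the same value $\rho$ at the labels, hence $\dat\leq\dats$ pointwise; the content of the corollary is the reverse inequality.) Geometrically the statement reflects that, when $\rho$ is affine on $\Gamma_i$, the lifted graph of $\dat$ over the edge $[\Gr_{i-1},\Gr_i]$ is already the straight segment joining $(\Gr_{i-1},\rho(\gamma_i))$ and $(\Gr_i,\rho(\gamma_{i+1}))$, so convexifying over all edges produces nothing beyond the convex hull of the finitely many label points $(\Gr_j,\rho(\gamma_{j+1}))$, which is exactly what $\dats$ retains.

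Concretely, I would first read off $\dats\fc$: the constraint $\alpha\in\{0,1\}$ selects only the staircase vertices, so $\dom\dats=\{\Gr_0,\dots,\Gr_k\}$ with $\dats(\Gr_j)=\rho(\gamma_{j+1})$ for $0\le j\le k$ (setting $\gamma_{k+1}:=\gamma_L$), whence
\[
  \dats\fc(\vl)=\max_{0\le j\le k}\ \iprod{\Gr_j}{\vl}-\rho(\gamma_{j+1}).
\]
For $\dat$, since $\dat=\min_{1\le i\le k}\dat_i$ we have $\dat\fc=\max_{1\le i\le k}\dat_i\fc$ with $\dat_i\fc$ given by \eqref{eq:dataterm_optimize}. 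The one non-routine computation is $\rho_i\fc$ for $\rho_i=\rho+\delta_{\Gamma_i}$ with $\rho$ affine on $\Gamma_i$, say $\rho(t)=a_it+b_i$ there: the supremum $\sup_{t\in[\gamma_i,\gamma_{i+1}]}(s-a_i)t-b_i$ is attained at an endpoint, which together with $a_i\gamma_i+b_i=\rho(\gamma_i)$, $a_i\gamma_{i+1}+b_i=\rho(\gamma_{i+1})$ gives
\[
  \rho_i\fc(s)=\max\bigl\{\,s\gamma_i-\rho(\gamma_i),\ s\gamma_{i+1}-\rho(\gamma_{i+1})\,\bigr\}.
\]
(This also makes the remark after Proposition~1 explicit: on $\Gamma_i$ one has $\rho_i=\rho_i\fcc$, an affine function on a compact interval being already closed and convex.)

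Substituting $s=\vl_i/(\gamma_{i+1}-\gamma_i)$ into this and into \eqref{eq:dataterm_optimize}, and simplifying (the affine offset $c_i$ cancels against the $\gamma_i$-term coming from $\rho_i\fc$, and $\Gr_i=\Gr_{i-1}+e_i$ gives $\iprod{\Gr_{i-1}}{\vl}+\vl_i=\iprod{\Gr_i}{\vl}$), one obtains
\[
  \dat_i\fc(\vl)=\max\bigl\{\,\iprod{\Gr_{i-1}}{\vl}-\rho(\gamma_i),\ \iprod{\Gr_i}{\vl}-\rho(\gamma_{i+1})\,\bigr\}.
\]
Taking the maximum over $i=1,\dots,k$, the two endpoint families $\{\iprod{\Gr_{i-1}}{\vl}-\rho(\gamma_i)\}_{i=1}^{k}$ and $\{\iprod{\Gr_i}{\vl}-\rho(\gamma_{i+1})\}_{i=1}^{k}$ together exhaust precisely $\{\iprod{\Gr_j}{\vl}-\rho(\gamma_{j+1})\}_{j=0}^{k}$, so $\dat\fc=\dats\fc$ and hence $\dat\fcc=\dats\fcc$. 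The main obstacle is purely bookkeeping: arranging the cancellation of the offset $c_i$ cleanly, and checking that when the two endpoint families are merged the boundary indices $j=0$ and $j=k$ are each produced exactly once, with no vertex $\Gr_j$ missed or double-counted.
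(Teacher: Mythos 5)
Your proof is correct and rests on the same key fact as the paper's: for $\rho$ affine on $\Gamma_i$, the conjugate $\rho_i^*$ is unchanged by restricting $\rho_i$ to the two endpoints $\{\gamma_i,\gamma_{i+1}\}$ (your ``supremum of an affine function over an interval is attained at an endpoint'' is the dual form of the paper's ``the convex hull of two points is a line,'' combined with the invariance of the relaxation under $\rho_i\mapsto\rho_i^{**}$). You simply carry this out as an explicit computation of $\dat^*=\dats^*$, which is a fully valid and more detailed instantiation of the paper's one-line argument, and as a by-product already produces the conjugate formula used in the proof of Prop.~\ref{prop:standardRelation}.
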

\begin{proof}
Consider an additional constraint $\delta_{\{\gamma_i, \gamma_{i+1}\}}$ for each $\rho_i$, which corresponds to selecting $\alpha \in \{ 0, 1 \}$ in \eqref{eq:dataterm_lifted_part}. The fact that our relaxation is independent of whether we choose $\rho_i$ or $\rho_i^{**}$, along with the fact that the convex hull of two points is a line, yields the assertion.
\end{proof}
\vspace{-0.2cm}
For the piecewise linear case, it is possible to find an explicit form of the biconjugate.
\begin{prop} \label{prop:standardRelation}
Let us denote by $\textbf{r} \in \bbR^k$ the vector with
\begin{equation}
\textbf{r}_i = \rho(\gamma_{i+1}) - \rho(\gamma_{i}), \ \ 1 \leq i \leq k. 
\end{equation}
Under the assumptions of Prop.~\ref{prop:piecewiseLinear}, one obtains:
\begin{equation}
  \dats^{**}(\ul) = 
  \begin{cases}
     \rho(\gamma_1) + \langle \ul, \textbf{r} \rangle, \qquad &\text{if } ~ \ul_{i} \geq \ul_{i+1}, \ul_i \in [0,1],\\
    \infty, & \text{else.}
  \end{cases}
  \label{eq:dataterm_standard_lifted_relaxed}
\end{equation}
\end{prop}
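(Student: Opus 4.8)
The plan is to exploit the fact that $\dats$ has finite domain: it equals $+\infty$ except at the $k+1$ vertices $\Gr_0,\Gr_1,\dots,\Gr_k$ (with $\Gr_0$ the zero vector), where by definition $\dats(\Gr_j)=\rho(\gamma_{j+1})$ for $0\le j\le k$. For a function with finite domain the biconjugate is simply the lower convex envelope of the corresponding finitely many points, i.e.
\[
  \dats^{**}(\ul)=\min\Big\{\textstyle\sum_{j=0}^{k}\lambda_j\,\dats(\Gr_j)\;:\;\lambda_j\ge 0,\ \textstyle\sum_{j}\lambda_j=1,\ \textstyle\sum_{j}\lambda_j\Gr_j=\ul\Big\},
\]
with the convention that a minimum over the empty set is $+\infty$ (the epigraph on the right is a polyhedron, hence closed, so no closure operation is needed). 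First I would record this reduction, so that it only remains to evaluate the right-hand side.

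Next I would determine the domain. The differences $\Gr_1-\Gr_0,\dots,\Gr_k-\Gr_0$ are $e_1,\,e_1+e_2,\,\dots,\,e_1+\dots+e_k$, which are linearly independent; hence $\Gr_0,\dots,\Gr_k$ are affinely independent and form a $k$-simplex $\Delta:=\mathrm{conv}\{\Gr_0,\dots,\Gr_k\}$. I claim $\Delta=\{\ul:1\ge\ul_1\ge\dots\ge\ul_k\ge 0\}$, which (with the natural convention $\ul_{k+1}=0$) is exactly the feasible set in \eqref{eq:dataterm_standard_lifted_relaxed}. The inclusion ``$\subseteq$'' is immediate since every $\Gr_j$ satisfies these inequalities and the set is convex. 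For ``$\supseteq$'', given such a $\ul$ set $\ul_0:=1$ and $\ul_{k+1}:=0$; then $\ul=\sum_{j=0}^{k}(\ul_j-\ul_{j+1})\Gr_j$ with nonnegative coefficients summing to $1$, since checking the $m$-th coordinate telescopes to $\ul_m$. Outside $\Delta$ no admissible $\lambda$ exists, so $\dats^{**}\equiv+\infty$ there, which is the second case of \eqref{eq:dataterm_standard_lifted_relaxed}.

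On $\Delta$, affine independence of the vertices makes the barycentric coordinates $\lambda$ unique, so the minimum above is attained at this single feasible $\lambda$ and $\dats^{**}|_\Delta$ is the affine interpolant of the data $(\Gr_j,\dats(\Gr_j))$. It then remains only to check that the affine function $\ell(\ul):=\rho(\gamma_1)+\iprod{\ul}{\textbf{r}}$ matches the prescribed values at the vertices: $\ell(\Gr_0)=\rho(\gamma_1)$, and for $j\ge 1$, since $\Gr_j$ has its first $j$ entries equal to $1$, $\ell(\Gr_j)=\rho(\gamma_1)+\sum_{i=1}^{j}\textbf{r}_i=\rho(\gamma_1)+\sum_{i=1}^{j}\big(\rho(\gamma_{i+1})-\rho(\gamma_i)\big)=\rho(\gamma_{j+1})=\dats(\Gr_j)$ by telescoping. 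As two affine functions agreeing on the $k+1$ affinely independent points $\Gr_0,\dots,\Gr_k$ agree everywhere on their affine hull, $\dats^{**}(\ul)=\ell(\ul)$ on $\Delta$, which is the first case.

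The only mildly delicate point is the very first step: a clean justification that $\dats^{**}$ is the lower convex envelope of the finite point set and is $+\infty$ off $\mathrm{conv}(\dom\dats)$. This is standard, but I would state it carefully, e.g.\ combining $\dom\dats\subseteq\dom\dats^{**}\subseteq\overline{\mathrm{conv}}\,\dom\dats=\Delta$ with the remark that $\ell$ is an affine (hence convex) minorant of $\dats$, so $\dats^{**}\ge\ell$, while Jensen's inequality and affinity of $\ell$ give $\dats^{**}(\ul)\le\sum_j\lambda_j\dats(\Gr_j)=\ell(\ul)$ for every representation $\ul=\sum_j\lambda_j\Gr_j$ with $\ul\in\Delta$. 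Alternatively, one could invoke Corollary~\ref{prop:piecewiseLinear} to replace $\dats$ by $\dat$ and specialise the biconjugate formula of the preceding proposition, but the direct computation above is shorter.
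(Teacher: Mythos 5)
Your proof is correct, and it takes a genuinely different route from the one in the paper. The paper proceeds by explicit Fenchel calculus: it first computes $\dats^*(\vl)=\max_i\bigl(\sum_{l=1}^{i-1}\vl_l-\rho(\gamma_i)\bigr)$, then evaluates the second conjugate through a change of variables $\textbf{p}_i=\sum_{l=1}^i\vl_l$, $\mu_i=\ul_i-\ul_{i+1}$, and a case analysis on when the resulting supremum is finite (including the slightly delicate reduction of the supremum over $\textbf{p}$ to a scalar supremum over $z=\max_i\textbf{p}_i$). You instead bypass both conjugations by observing that $\dats$ has finite effective domain $\{\Gr_0,\dots,\Gr_k\}$ with $\dats(\Gr_j)=\rho(\gamma_{j+1})$, so that $\dats^{**}$ is the polyhedral lower convex envelope of $k+1$ affinely independent points; the domain is then the simplex $\{1\ge\ul_1\ge\dots\ge\ul_k\ge0\}$ (your telescoping barycentric representation $\ul=\sum_j(\ul_j-\ul_{j+1})\Gr_j$ is exactly right and mirrors the paper's inverse change of variables), and on that simplex the unique affine interpolant is $\rho(\gamma_1)+\iprod{\ul}{\textbf{r}}$ by the same telescoping of $\textbf{r}$. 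Your route is shorter and makes the geometric content transparent — the relaxed dataterm is nothing but affine interpolation of the label costs over a simplex — at the cost of invoking the standard fact that the biconjugate of a proper function with finite domain equals its (automatically closed, polyhedral) convex hull; you flag this point and justify it adequately. The paper's computation is more pedestrian but stays entirely within elementary sup manipulations and produces the explicit form of $\dats^*$ as a by-product. Both arguments establish the claim; no gap.
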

\begin{proof}
  See appendix.
\end{proof}
Up to an offset (which is irrelevant for the optimization), one can see that \eqref{eq:dataterm_standard_lifted_relaxed} coincides with the dataterm of~\cite{Pock-et-al-cvpr09}, the discretizations of~\cite{PCBC-SIIMS,PockECCV}, and -- after a change of variable -- with \cite{Lellmann-Schnoerr-siims11}. This not only proves that the latter is optimizing a convex envelope, but also shows that our method naturally generalizes the work from piecewise linear to arbitrary piecewise convex energies. Fig.~\ref{fig:standard_energy} and Fig.~\ref{fig:precise_energy} illustrate the difference of $\dats^{**}$ and $\dat^{**}$ on the example of a nonconvex stereo matching cost.

Because our method allows arbitrary convex functions on each $\Gamma_i$, we can prove that, for the two label case, our approach optimizes the convex envelope of the dataterm. 
\begin{prop}
  In the case of binary labeling, i.e., $L = 2$, the convex envelope of \eqref{eq:dataterm_lifted}
  reduces to 
  \begin{equation}
    \dat^{**}(\ul) = \rho^{**}\left( \gamma_1 + \ul (\gamma_2 - \gamma_1) \right), \text{ with } \ul \in [0, 1].
  \end{equation}
  \vspace{-0.5cm}
  \label{prop:unlifted}
\end{prop}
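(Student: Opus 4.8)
The plan is to specialize the general construction to $L = 2$, where $k = 1$, the single interval is $\Gamma_1 = [\gamma_1,\gamma_2] = \Gamma$, and the lifted variable $\ul$ is a scalar. First I would unwind the definitions \eqref{eq:dataterm_lifted}--\eqref{eq:dataterm_lifted_part}: in dimension $k=1$ we have $\Gr_1 = (1)$ and $\Gr_0 = (0)$, hence $\Gr_1^{\alpha} = \alpha \Gr_1 + (1-\alpha)\Gr_0 = \alpha$, so the lifting constraint ``$\ul = \Gr_1^{\alpha}$, $\alpha\in[0,1]$'' is simply ``$\ul = \alpha\in[0,1]$''. Therefore
\[
  \dat(\ul) \;=\; \dat_1(\ul) \;=\;
  \begin{cases}
    \rho\big(\gamma_1 + \ul(\gamma_2-\gamma_1)\big), & \ul\in[0,1],\\
    +\infty, & \text{else},
  \end{cases}
\]
that is, $\dat = (\rho + \delta_{\Gamma})\circ A$ with $A(t) := \gamma_1 + t(\gamma_2-\gamma_1)$, a bijective affine map of $\bbR$ because $\gamma_1 < \gamma_2$ (the indicator $\delta_\Gamma$ accounts for the restriction $\alpha\in[0,1]$, since $A([0,1]) = \Gamma_1 = \Gamma$).

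The core of the argument is the affine covariance of the convex envelope in one dimension: if $g:\bbR\to\bbR\cup\{\infty\}$ and $A(t) = at+b$ with $a\neq 0$, then $(g\circ A)^{**} = g^{**}\circ A$. I would establish this by two short conjugate computations: substituting $z = at+b$ gives $(g\circ A)^{*}(y) = -\tfrac{by}{a} + g^{*}(y/a)$, and then substituting $w = y/a$ in the definition of the biconjugate gives $(g\circ A)^{**}(t) = \sup_{w}\,w(at+b) - g^{*}(w) = g^{**}(at+b)$. Applying this with $g = \rho + \delta_{\Gamma}$, $a = \gamma_2-\gamma_1 \neq 0$ and $b = \gamma_1$ yields $\dat^{**}(\ul) = (\rho+\delta_{\Gamma})^{**}\big(\gamma_1 + \ul(\gamma_2-\gamma_1)\big)$. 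Since the dataterm $\rho$ in \eqref{eq:general_energy} is only considered on $\Gamma$, we have $\rho + \delta_{\Gamma} = \rho$ as extended-real-valued functions, hence $(\rho+\delta_{\Gamma})^{**} = \rho^{**}$; moreover $\rho^{**}$ is $+\infty$ outside $\Gamma = [\gamma_1,\gamma_2]$, so the effective domain is exactly $\{\ul : \gamma_1 + \ul(\gamma_2-\gamma_1)\in\Gamma\} = [0,1]$, which is the constraint appearing in the proposition. This proves the claim.

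As a sanity check I would also read the statement off directly from \eqref{eq:dataterm_sublabel_biconj}: for $k=1$ the outer maximum over $i$ is vacuous, so $\dat^{**} = \dat_1^{**}$, and by \eqref{eq:dataterm_optimize} the conjugate $\dat_1^{*}$ is merely an affinely rescaled and translated copy of $\rho_1^{*} = \rho^{*}$; carrying out the induced change of variable in $\sup_{\vl} \iprod{\ul}{\vl} - \dat_1^{*}(\vl)$ again returns $\rho^{**}$ evaluated at $\gamma_1 + \ul(\gamma_2-\gamma_1)$. There is no substantive obstacle here; the only points requiring care are the bookkeeping of the indicator $\delta_{\Gamma}$ (equivalently the box constraint $\ul\in[0,1]$) and the use of $\gamma_1 \neq \gamma_2$ to guarantee that $A$ is invertible.
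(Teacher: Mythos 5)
Your proof is correct and follows essentially the same route as the paper: the paper also reduces to $\dat^{**}=\dat_1^{**}$ (the max over $i$ being vacuous for $k=1$) and then performs the substitution $\vl=(\gamma_2-\gamma_1)\tilde\vl$ in the conjugate and biconjugate, which is precisely the affine-covariance identity $(g\circ A)^{**}=g^{**}\circ A$ that you isolate as a lemma. Your packaging of the change of variables as a standalone statement, and your explicit bookkeeping of $\delta_\Gamma$ and the resulting domain $[0,1]$, are only organizational differences from the paper's inline computation.
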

\begin{proof}
  See appendix.
\end{proof}
\subsection{A Lifted Representation of the Total Variation}
We now want to find a lifted convex formulation that emulates the total
variation regularization in \eqref{eq:general_energy}. We follow \cite{Chambolle-et-al-siims12} and define an appropriate integrand
of the functional
\begin{equation}
  TV(\ul) = \int_{\Omega} \reg(x, D \ul),
\end{equation}
where the distributional derivative $D \ul$ is a finite $\bbR^{k \times d}$-valued Radon measure \cite{Ambrosio-et-al:BV}.  We define
\begin{equation}
\reg(\gl) = \underset{1 \leq i \leq j \leq k} \min ~ \reg_{i,j}(\gl).
  \label{eq:regularizer_lifted}
\end{equation}
The individual $\reg_{i,j} : \bbR^{k \times d} \to \bbR \cup \{\infty \}$ are given by:
\begin{equation}
  \reg_{i,j}(\gl) = 
  \begin{cases}
    \left|\gamma_i^{\alpha} - \gamma_j^{\beta}\right| \cdot \normc{\nu}_2, \qquad & \text{ if } \gl = (\Gr_i^{\alpha} - \Gr_j^{\beta}) ~ \nu^{\mathsf{T}},\\
    \infty, & \text{ else,}
  \end{cases}
  \label{eq:regularizer_single}
\end{equation}
for some $\alpha, \beta \in [0,1]$ and $\nu \in \bbR^d$. The intuition is that $\Phi_{i,j}$ penalizes a jump from $\gamma_i^{\alpha}$ to $\gamma_j^{\beta}$ in the direction of $\nu$. Since $\reg$ is nonconvex we compute the convex envelope. 
\vspace{-2mm}
\begin{prop}
  The convex envelope of \eqref{eq:regularizer_lifted} is 
  \begin{equation}
    \reg^{**}(\gl) = \underset{\ql \in \mathcal{K}} \sup ~
    \iprod{\ql}{\gl},
  \end{equation}
  \vspace{-2mm}
  where $\mathcal K \subset \bbR^{k \times d}$ is given as:
  \begin{equation}
    \begin{aligned}
      \mathcal{K} = \left\{ \ql \in \vphantom{\left| \ql^{\mathsf{T}} ( \Gr_i^{\alpha} - \Gr_j^{\beta} )\right|_2}\right.&\bbR^{k \times d} ~\Big\vert~ \\ 
      & \left| \ql^{\mathsf{T}} ( \Gr_i^{\alpha} - \Gr_j^{\beta} )\right|_2 \leq \left| \gamma_i^{\alpha} - \gamma_j^{\beta}\right|, \\
      &  \left.\vphantom{\left| \ql^{\mathsf{T}} ( \Gr_i^{\alpha} - \Gr_j^{\beta} )\right|_2}\forall ~ 1 \leq i \leq j \leq k, ~ \forall \alpha, \beta \in [0, 1] \right\}.
    \end{aligned}
    \label{eq:constraints_infinite}
  \end{equation}
  \label{prop:regularizer}
  \vspace{-0.5cm}
\end{prop}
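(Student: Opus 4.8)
The plan is to compute the biconjugate $\reg^{**}$ of \eqref{eq:regularizer_lifted} directly, exploiting two structural features, in exact analogy with the convexification of the dataterm above: $\reg$ is a pointwise minimum of the pieces $\reg_{i,j}$, and each piece is positively one-homogeneous with a conical effective domain. First I would observe that conjugation turns a pointwise infimum into a pointwise supremum, so $\reg^{*} = \bigl(\min_{1\le i\le j\le k}\reg_{i,j}\bigr)^{*} = \sup_{1\le i\le j\le k}\reg_{i,j}^{*}$, with no extra hypotheses needed. It therefore suffices to identify each $\reg_{i,j}^{*}$ and intersect.

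For the conjugate of a single piece, I would first check that $\reg_{i,j}$ is well-defined: if a nonzero rank-one $\gl$ admits two decompositions $(\Gr_i^{\alpha}-\Gr_j^{\beta})\nu^{\mathsf{T}}$, they differ only by a scalar $s$, and since $\gamma_i^{\alpha}-\gamma_j^{\beta}=\sum_{\ell}(\Gr_i^{\alpha}-\Gr_j^{\beta})_{\ell}(\gamma_{\ell+1}-\gamma_{\ell})$ scales by the same $s$ while $\nu$ scales by $s^{-1}$, the product $\normc{\gamma_i^{\alpha}-\gamma_j^{\beta}}\,\normc{\nu}_2$ is invariant. Each $\reg_{i,j}$ is moreover positively one-homogeneous on a cone (scaling $\nu$ by $t>0$ keeps $\gl$ in the domain and multiplies the value by $t$), so $\reg_{i,j}^{*}(\ql)=\sup_{\gl}\bigl(\iprod{\ql}{\gl}-\reg_{i,j}(\gl)\bigr)$ equals $0$ when $\iprod{\ql}{\gl}\le\reg_{i,j}(\gl)$ for all $\gl$ and $+\infty$ otherwise; hence $\reg_{i,j}^{*}=\delta_{\mathcal K_{i,j}}$ with $\mathcal K_{i,j}=\{\ql:\iprod{\ql}{\gl}\le\reg_{i,j}(\gl)\ \forall \gl\}$. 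Parametrizing the domain by $(\alpha,\beta,\nu)$ and using $\iprod{\ql}{(\Gr_i^{\alpha}-\Gr_j^{\beta})\nu^{\mathsf{T}}}=\iprod{\ql^{\mathsf{T}}(\Gr_i^{\alpha}-\Gr_j^{\beta})}{\nu}$, membership in $\mathcal K_{i,j}$ reads $\iprod{\ql^{\mathsf{T}}(\Gr_i^{\alpha}-\Gr_j^{\beta})}{\nu}\le\normc{\gamma_i^{\alpha}-\gamma_j^{\beta}}\,\normc{\nu}_2$ for all $\alpha,\beta\in[0,1]$ and $\nu\in\bbR^{d}$; taking the supremum over $\nu$, which is attained at $\nu=\ql^{\mathsf{T}}(\Gr_i^{\alpha}-\Gr_j^{\beta})$ by Cauchy--Schwarz, this is \emph{equivalent} to $\normc{\ql^{\mathsf{T}}(\Gr_i^{\alpha}-\Gr_j^{\beta})}_2\le\normc{\gamma_i^{\alpha}-\gamma_j^{\beta}}$ for all $\alpha,\beta\in[0,1]$.

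Reassembling, $\reg^{*}=\sup_{1\le i\le j\le k}\delta_{\mathcal K_{i,j}}=\delta_{\bigcap_{i\le j}\mathcal K_{i,j}}=\delta_{\mathcal K}$, where $\mathcal K$ is exactly the set in \eqref{eq:constraints_infinite}: an intersection of closed convex cylinders (preimages of Euclidean balls under the linear maps $\ql\mapsto\ql^{\mathsf{T}}(\Gr_i^{\alpha}-\Gr_j^{\beta})$), hence closed and convex, and nonempty since $0\in\mathcal K$. Conjugating once more yields $\reg^{**}=(\delta_{\mathcal K})^{*}=\sup_{\ql\in\mathcal K}\iprod{\ql}{\gl}$, which is the claimed formula; since $\reg\ge 0$ is proper, this biconjugate is indeed the convex envelope.

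The step I expect to be the crux is the single-piece conjugate: one must make sure the non-uniqueness of the rank-one decomposition does not render $\reg_{i,j}$ ambiguous, and — more importantly — that the Cauchy--Schwarz estimate is \emph{attained}, so that the cylinder description of $\mathcal K_{i,j}$ is an equality rather than merely an outer bound. Everything else is routine bookkeeping with positive one-homogeneity and the conjugate-of-a-minimum identity.
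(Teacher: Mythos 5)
Your proposal is correct and follows essentially the same route as the paper: conjugate the pointwise minimum into a pointwise maximum of the $\reg_{i,j}^*$, identify each $\reg_{i,j}^*$ as the indicator of a ball-type constraint via the conjugate of the scaled $\ell_2$-norm (equivalently, your Cauchy--Schwarz attainment argument), and intersect to obtain $\mathcal{K}$ before conjugating once more. The only additions beyond the paper's proof are your explicit checks of the well-definedness of $\reg_{i,j}$ under non-unique rank-one decompositions and of the closedness, convexity, and nonemptiness of $\mathcal{K}$, which are welcome but do not change the argument.
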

\begin{proof}
  See appendix.
\end{proof}
\vspace{-2mm}
\figEpi
The set $\mathcal{K}$ from Eq.~\eqref{eq:constraints_infinite} involves infinitely 
many constraints which makes numerical optimization difficult. 
As the 
following proposition reveals, the infinite number of constraints can be reduced to only linearly many, allowing to enforce the constraint $\ql \in \mathcal{K}$ exactly. 
\vspace{-2mm}
\begin{prop}
  In case the labels are ordered, i.e., $\gamma_1 < \gamma_2 < \hdots
  < \gamma_L$, then the constraint set
  $\mathcal{K}$ from Eq.~\eqref{eq:constraints_infinite} is equal to 
  \vspace{-2mm}
  \begin{equation}
    \begin{aligned}
      \mathcal{K} = \{ \ql \in &\bbR^{k \times d} ~\mid~ \normc{\ql_i}_2 \leq \gamma_{i+1} - \gamma_i,  ~ \forall i \}.
    \end{aligned}
  \end{equation}
  \vspace{-0.5cm}
  \label{prop:reduction}
\end{prop}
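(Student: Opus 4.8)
The plan is to prove the two inclusions separately, the key ingredient being the explicit coordinate description of the lifted vectors. Recall that $\Gr_i^{\alpha}$ is the vector of $\bbR^k$ whose first $i-1$ entries equal $1$, whose $i$-th entry equals $\alpha$, and whose remaining entries vanish. In particular $\Gr_i^1 - \Gr_i^0 = \Gr_i - \Gr_{i-1} = e_i$, the $i$-th standard basis vector, and $\gamma_i^1 - \gamma_i^0 = \gamma_{i+1} - \gamma_i$, which is nonnegative by the ordering hypothesis. Throughout, $\ql_i \in \bbR^d$ denotes the $i$-th row of $\ql \in \bbR^{k \times d}$, so that $\ql^{\mathsf T} e_i = \ql_i$.

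For the inclusion of the left-hand set into the right-hand one, I would simply specialize the defining constraints of $\mathcal{K}$ in Eq.~\eqref{eq:constraints_infinite} to the choice $j = i$, $\alpha = 1$, $\beta = 0$. Then $\Gr_i^{\alpha} - \Gr_j^{\beta} = e_i$ and $|\gamma_i^{\alpha} - \gamma_j^{\beta}| = \gamma_{i+1} - \gamma_i$, so the constraint reads $\normc{\ql_i}_2 \leq \gamma_{i+1} - \gamma_i$, as claimed. For the reverse inclusion, assume $\normc{\ql_i}_2 \leq \gamma_{i+1} - \gamma_i$ for all $i$ and verify the constraint for arbitrary $1 \leq i \leq j \leq k$ and $\alpha, \beta \in [0,1]$. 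Expanding in coordinates, $\Gr_i^{\alpha} - \Gr_j^{\beta}$ equals $(\alpha - \beta) e_i$ when $i = j$, and $(\alpha - 1) e_i - \sum_{l=i+1}^{j-1} e_l - \beta e_j$ when $i < j$ (the intermediate sum being empty for $j = i+1$). Applying the triangle inequality to $\ql^{\mathsf T}(\Gr_i^{\alpha} - \Gr_j^{\beta})$, inserting the row bounds $\normc{\ql_l}_2 \leq \gamma_{l+1} - \gamma_l$, and noting that the resulting expression telescopes, one obtains the upper bound $\gamma_j^{\beta} - \gamma_i^{\alpha}$. Since $i \leq j$ and the labels are ordered we have $\gamma_i^{\alpha} \leq \gamma_{i+1} \leq \gamma_j \leq \gamma_j^{\beta}$, so $\gamma_j^{\beta} - \gamma_i^{\alpha} = |\gamma_i^{\alpha} - \gamma_j^{\beta}|$, which is exactly the bound required for $\ql \in \mathcal{K}$.

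There is no deep obstacle here; the argument is essentially bookkeeping. The two points that require care are getting the coordinate expansion of $\Gr_i^{\alpha} - \Gr_j^{\beta}$ correct in the boundary cases ($i = j$, $j = i+1$, and $i$ or $j$ at the ends of the range), and invoking the ordering hypothesis at precisely the two places it is needed: to guarantee the row bounds $\gamma_{i+1} - \gamma_i \geq 0$ make sense, and to discard the absolute value after the telescoping sum. This also makes transparent \emph{why} the infinitely many constraints collapse: the unit jumps $e_i$ are the extreme points of the relevant difference set, and every other constraint is dominated by a telescoping combination of these.
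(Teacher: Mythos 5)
Your proof is correct and follows essentially the same strategy as the paper's: one inclusion by specializing the constraints to $i=j$, $\alpha=1$, $\beta=0$, and the reverse by expanding $\Gr_i^{\alpha}-\Gr_j^{\beta}$ in coordinates and applying the triangle inequality with a telescoping sum of the row bounds. The only difference is organizational: the paper routes the argument through an intermediate set of quadratically many partial-sum constraints and a two-stage bootstrap over $\beta\in\{0,1\}$, whereas you obtain the bound $\gamma_j^{\beta}-\gamma_i^{\alpha}$ in a single triangle-inequality step, which is a legitimate streamlining.
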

\begin{proof}
  See appendix.
\end{proof}
This shows that the proposed regularizer coincides with the total variation from \cite{Chambolle-et-al-siims12}, where it has been derived based on \eqref{eq:regularizer_single} for $\alpha$ and $\beta$ restricted to $\{0,1\}$. 
Prop.~\ref{prop:reduction} together with Prop.~\ref{prop:unlifted} show that for $k=1$ our formulation
amounts to unlifted $\TV$ optimization with a convexified dataterm.

\section{Numerical Optimization}
\figROF
\figTruncatedROF

Discretizing $\Omega\subset \bbR^d$ as a $d$-dimensional Cartesian grid, the relaxed energy minimization problem becomes
\begin{equation} \label{eq:discrete_saddle_point}
\min_{\ul : \Omega \to \mathbb{R}^k} \sum_{x \in \Omega} \dat^{**}(x, \ul(x)) + \reg^{**}(x, \nabla \ul(x)),
\end{equation}
where $\nabla$ denotes a forward-difference operator with $\nabla \ul : \Omega \to \mathbb{R}^{k \times d}$. We rewrite the dataterm given in equation~\eqref{eq:dataterm_sublabel_biconj} by replacing the pointwise maximum over the conjugates $\dat_i^*$ with a maximum over a real number $q \in \mathbb{R}$ and obtain the following saddle point formulation of problem~\eqref{eq:discrete_saddle_point}:
\begin{equation}\label{eq:discrete_saddle_point_final}
\min_{\ul : \Omega \to \mathbb{R}^k} \max_{\substack{(\vl, q) \in \mathcal{C}\\[0.5mm] \boldsymbol p : \Omega \to \mathcal{K}}} \langle \ul, \vl \rangle - \sum_{x \in \Omega} q(x) + \langle \boldsymbol p, \nabla \ul\rangle,
\end{equation}
\begin{equation}
    \begin{aligned}
	\mathcal{C} = \{ &(\vl, q) : \Omega \to \bbR^{k}\times \bbR \mid q(x) \geq \dat_i^*(\vl(x)), \ \forall x, \forall i\}.
    \end{aligned}
    \label{eq:constraint_C}
\end{equation}
We numerically compute a minimizer of problem~\eqref{eq:discrete_saddle_point_final} using a first-order primal-dual method~\cite{Esser-Zhang-Chan-10,PCBC-ICCV09} with diagonal preconditioning \cite{Pock-Chambolle-iccv11} and adaptive steps \cite{Goldstein-Esser-13}. It basically alternates between a gradient descent step in the primal variable and a gradient ascent step in the dual variable. Subsequently the dual variables are orthogonally projected onto the sets $\mathcal{C}$ respectively $\mathcal{K}$. The projection onto the set $\mathcal{K}$ is a simple $\ell_2$-ball projection.
To simplify the projection onto $\mathcal{C}$, we transform the $k$-dimensional epigraph constraints in \eqref{eq:constraint_C}
into $1$-dimensional scaled epigraph constraints by introducing an additional variable $z:\Omega \to \mathbb{R}^k$ with:
\begin{equation} \label{eq:equality_constr_data}
  \begin{aligned}
    \boldsymbol z_i(x)= \left[ q(x) - c_i\left(\vl(x)\right) \right] \left( \gamma_{i+1} - \gamma_i \right).
  \end{aligned}
\end{equation}
Using equation~\eqref{eq:dataterm_optimize} we can now rewrite the constraints in~\eqref{eq:constraint_C} as
\begin{equation}
  \frac{ \boldsymbol z_i(x)}{\gamma_{i+1} - \gamma_i} \geq \rho_i^*\left( \frac{\vl_i(x)}{\gamma_{i+1} - \gamma_i}  \right).
\end{equation}
We implement the newly introduced equality constraints~\eqref{eq:equality_constr_data} introducing a Lagrange multiplier $\boldsymbol s : \Omega \to \mathbb{R}^k$. It remains to discuss the orthogonal projections onto the epigraphs of the conjugates $\rho_i^*$. Currently we support quadratic and piecewise linear convex pieces $\rho_i$. For the piecewise linear case, the conjugate $\rho_i^*$ is a piecewise linear function with domain $\bbR$. The slopes correspond to the $x$-positions of the sublabels and the intercepts correspond to the function values at the sublabel positions.

The conjugates as well as the epigraph projections of both, a quadratic and a piecewise linear piece are depicted in Fig.~\ref{fig:epigraph_projection}. For the quadratic case, the projection onto the epigraph of a parabola is computed using \cite[Appendix B.2]{strekalovskiy-et-al-siims14}.
\figZachKohli
\vspace{-0.2cm}
\section{Experiments}
We implemented the primal-dual algorithm in CUDA to run on GPUs.
For $d=2$, our implementation of the functional lifting framework
\cite{PCBC-SIIMS}, which will serve as a baseline method, requires $4
N (L - 1)$ optimization variables, while the proposed method requires
$6 N (L-1) + N$ variables, where $N$ is the number of points used to
discretize the domain $\Omega \subset \bbR^{d}$.  
As we will show, our method requires much fewer labels to yield comparable results,
thus, leading to an improvement in accuracy, memory usage, and
speed.
\figAniso
\figStereo

\subsection{Rudin-Osher-Fatemi Model}
As a proof of concept, we first evaluate the novel relaxation on 
the well-known Rudin-Osher-Fatemi (ROF) model \cite{Rudin-Osher-Fatemi-92}. 
It corresponds to \eqref{eq:general_energy} with the following dataterm:
\begin{equation}
  \rho(x, u(x)) = \left (u(x) - f(x) \right)^2,
  \label{eq:cont_rof}
\end{equation}
where $f : \Omega \rightarrow \bbR$ denotes the input data.
While there is no practical use in applying convex relaxation methods to
an already convex problem such as the ROF model, the purpose of this is
two-fold. Firstly, it allows us to measure the overhead introduced by our method by
comparing it to standard convex optimization methods which do not
rely on functional lifting. Secondly, we can experimentally verify
that the relaxation is tight for a convex dataterm.

In Fig.~\ref{fig:rof_compare} we solve \eqref{eq:cont_rof} directly
using the primal-dual algorithm \cite{Goldstein-Esser-13}, using
the baseline functional lifting method \cite{PCBC-SIIMS} and using 
our proposed algorithm. First, the globally optimal energy was
computed using the direct method with a very high number of
iterations. Then we measure how long each method took to reach 
this global optimum to a fixed tolerance. 

The baseline method fails to reach the global optimum even 
for $256$ labels. While the lifting framework introduces a certain
overhead, the proposed method finds the same globally optimal 
energy as the direct unlifted optimization approach and generalizes
to nonconvex energies.

\subsection{Robust Truncated Quadratic Dataterm}

The quadratic dataterm in \eqref{eq:cont_rof} is often not well suited for real-world data 
as it comes from a pure Gaussian noise assumption and does not model
outliers. We now consider a robust truncated quadratic dataterm:
\begin{equation}
  \rho(x, u(x)) = \frac{\alpha}{2} \min \left\{ (u(x) - f(x))^2, \nu \right\}.
  \label{eq:cont_rof_robust}
\end{equation}
To implement \eqref{eq:cont_rof_robust}, we use a piecewise polynomial 
approximation of the dataterm. 
In Fig.~\ref{fig:rof_robust} we degraded the input image with additive Gaussian and 
salt and pepper noise. The parameters in
\eqref{eq:cont_rof_robust} were
chosen as $\alpha = 25$, $\nu = 0.025$ and $\lambda = 1$. The
proposed method requires significantly less labels to find lower
energies than the baseline method.

\subsection{Comparison to the Method of Zach and Kohli}
We remark that Prop.~\ref{prop:regularizer} and
Prop.~\ref{prop:reduction} hold for arbitrary convex
one-homogeneous functionals $\phi(\nu)$ instead of $\normc{\nu}_2$ in
equation \eqref{eq:regularizer_single}. In particular, they hold for the anisotropic total
variation $\phi(\nu) = \normc{\nu}_1$.  This generalization allows us to directly
compare our convex relaxation to the MRF approach of Zach and
Kohli~\cite{Zach-Kohli-eccv12}.
In Fig.~\ref{fig:zach_compare} we show the results of optimizing the
two models entitled ``DC-Linear'' and ``DC-MRF'' proposed in~\cite{Zach-Kohli-eccv12},
and of our proposed method with anisotropic regularization on the
robust truncated denoising energy \eqref{eq:cont_rof_robust}. We
picked the parameters as $\alpha = 0.2$, $\nu = 500$, and $\lambda = 1$. The label space is
also chosen as $\Gamma = [0, 256]$ as described in~\cite{Zach-Kohli-eccv12}.
\figDFF
Overall, all the energies are better than the ones reported
in~\cite{Zach-Kohli-eccv12}. It can be seen from
Fig.~\ref{fig:zach_compare} that the proposed relaxation is competitive to
the one proposed in~\cite{Zach-Kohli-eccv12}. In addition, The proposed
relaxation uses a more compact representation and extends to isotropic
regularizers.
To illustrate the advantages of isotropic regularizations, Fig.~\ref{fig:aniso} and Fig.~\ref{fig:iso} 
show a comparison of our proposed method for isotropic and anisotropic regularization in the next section. 

\vspace{-0.1cm}
\subsection{Stereo Matching}
\label{sec:stereo}
Given a pair of rectified images, the task of
finding a correspondence between the two images can be formulated as
an optimization problem over a scalar field $u : \Omega \rightarrow
\Gamma$ where each point $u(x) \in \Gamma$ denotes the displacement
along the epipolar line associated with each $x \in \Omega$. 
The overall cost functional fits Eq.~\eqref{eq:general_energy}.
In our experiments, we computed $\rho(x, u(x))$ for $270$ disparities on the 
Middlebury stereo benchmark \cite{Hirschmueller-gcpr14}
in a $4 \times 4$ patch using a truncated sum of absolute gradient
differences. 
For the dataterm of the proposed relaxation, we convexify the matching
cost $\rho$ in each range $\Gamma_i$ by numerically computing the convex
envelope using the gift wrapping algorithm. 

The first row in Fig.~\ref{fig:stereo_compare} shows the result of the proposed
relaxation using the convexified energy between two labels. The 
second row shows the baseline approach using the same amount of labels.
Even for $L=2$, the proposed method produces a reasonable depth map
while the baseline approach basically corresponds to a two region segmentation.

\vspace{-2mm}
\subsection{Phase Unwrapping}
Many sensors such as time-of-flight cameras or interferometric synthetic aperture radar (SAR) yield
cyclic data lying on the circle $\Sone$. 
Here we consider the task of total variation regularized
unwrapping. 
As is shown on the left in Fig.~\ref{fig:sar_experiments}, the dataterm is a nonconvex function where
each minimum corresponds to a phase shift by $2 \pi$:
\begin{equation}
  \rho \left( x, u(x) \right) = d_{\Sone}\left( u(x), f(x) \right)^2.
\end{equation}
For the experiments, we approximated the nonconvex energy by quadratic
pieces as depicted in Fig.~\ref{fig:sar_experiments}. The label space
is chosen as $\Gamma = [0, 4 \pi]$ and the regularization parameter was
set to $\lambda = 0.005$.
Again, it is visible in Fig.~\ref{fig:sar_experiments} that the baseline method shows
label space discretization and  
fails to unwrap the depth map correctly if the number of labels is
chosen too low. The proposed method yields a smooth unwrapped
result using only $8$ labels.

\vspace{-1mm}
\subsection{Depth From Focus}
In depth from focus the task is to recover the depth of a
scene, given a stack of images each taken from a constant position
but in a different focal setting, so that in each image only the
objects of a certain depth are sharp. We achieve this by estimating the
depth of a point by locally maximizing its contrast over the set of
images. 
We compute the cost by using the modified Laplacian function
\cite{Nayar-nakagawa-cvpr92} as a contrast measure.
Similar to the stereo experiments, we convexify the cost on each
label range by computing the convex hull. The results are shown
in Fig.~\ref{fig:dff_compare}. While the baseline method clearly
shows the label space discretization, the proposed approach 
yields a smooth depth map. 
Since the proposed method uses a convex lower bound of the lifted energy,
the regularizer has slightly more influence on the final result. 
This explains why the resulting depth maps in Fig.~\ref{fig:dff_compare}
and Fig.~\ref{fig:stereo_compare} look overall less noisy.

\section{Conclusion}

\begin{figure}
  \centering
  \captionsetup[subfloat]{labelformat=empty,justification=centering,singlelinecheck=false,margin=0pt}
  \subfloat[\scriptsize Piecewise convex energy]{
    \setlength\fheight{1.4cm} 
    \setlength\fwidth{3.5cm}
    \input{unwrapping_energy.tex} 
  }
  \subfloat[\scriptsize Input image]{
    \includegraphics[width=0.11\textwidth]{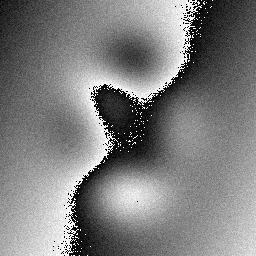} 
  }
  \subfloat[\scriptsize Ground truth]{
    \includegraphics[width=0.11\textwidth]{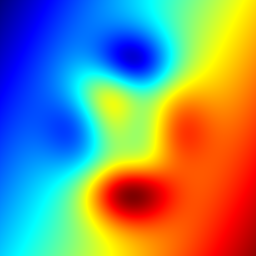} 
  }\\[-0.24cm]
  \subfloat[\scriptsize Baseline ($L=8$)]{
    \includegraphics[width=0.11\textwidth]{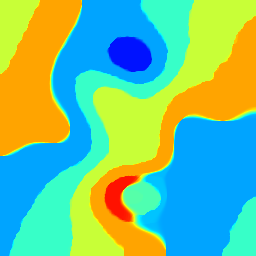} 
  }
  \subfloat[\scriptsize Baseline ($L=16$)]{
    \includegraphics[width=0.11\textwidth]{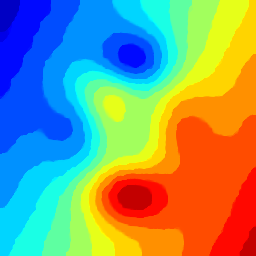} 
  }
  \subfloat[\scriptsize Baseline ($L=32$)]{
    \includegraphics[width=0.11\textwidth]{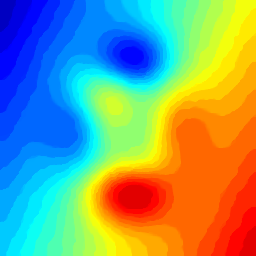} 
  }
  \subfloat[\scriptsize Proposed ($L=8$)]{
    \includegraphics[width=0.11\textwidth]{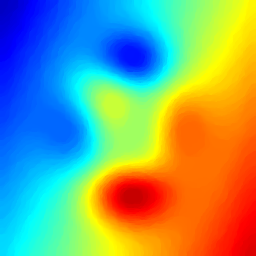} 
  }\\[1mm]
  \caption{We show the piecewise convex approximation of the phase unwrapping energy, followed by the
    cyclic input image and the unwrapped ground truth. With only 8 labels, the proposed method already
    yields a smooth reconstruction. The baseline method fails to unwrap
    the heightmap correctly using $8$ labels, and for $16$ and $32$
    labels, the discretization is still noticable.}
  \label{fig:sar_experiments}
\end{figure}

In this work we proposed a tight convex relaxation that can be interpreted as a sublabel--accurate formulation of classical multilabel problems. We showed that the local convex envelope involves infinitely many constraints, however we proved that it suffices to consider linearly many of those. The final formulation is a simple saddle-point problem that admits fast primal-dual optimization. Our method maintains sublabel accuracy even after discretization and for that reason outperforms existing spatially continuous methods. Interesting directions for future work include higher dimensional label spaces, manifold valued data and more general regularizers.

\begin{appendix}
\section{Appendix}
\begin{proof}[Proof of Proposition 1]
The proof follows from a direct calculation. We start with the definition of the biconjugate:
\begin{equation}
  \begin{aligned}
    \dat^{**}(\ul) &= \underset{\vl \in \bbR^k} \sup ~ \iprod{\ul}{\vl} - \left( \underset{1 \leq i \leq k} \min ~ \dat_i(\ul) \right)^* \\
    &= \underset{\vl \in \bbR^k} \sup ~ \iprod{\ul}{\vl} - \underset{1 \leq i \leq k} \max ~ \dat_i^*(\ul).
  \end{aligned}
  \label{eq:biconjugate}
\end{equation}
This shows the first equation inside the proposition. For the individual $\dat_i^*$ we again start with the definition of the convex conjugate:
\begin{equation}
  \begin{aligned}
    \dat_i^*(\vl) &= \underset{\alpha \in [0,1]} \sup ~ \iprod{\alpha \Gr_i + (1 - \alpha) \Gr_{i-1}}{\vl} - \\
    & \qquad \qquad \qquad \rho(\alpha \gamma_{i+1} + (1 - \alpha) \gamma_i) \\
    &= \underset{\alpha \in [0,1]} \sup ~ \iprod{\Gr_{i-1}}{\vl} + \alpha \vl_i - \rho(\gamma_i^\alpha ).
  \end{aligned}
  \label{eq:conjugate_2}
\end{equation}
Applying the substitution $\gamma_i^\alpha = \alpha \gamma_{i+1} + (1 - \alpha) \gamma_i$ and consequently $\alpha = \frac{\gamma_i^\alpha  - \gamma_i}{\gamma_{i+1} - \gamma_i}$ yields:
\begin{equation}
  \begin{aligned}
     &  \dat_i^*(\vl) = \underset{\gamma_i^\alpha  \in \Gamma_i} \sup ~ \iprod{\Gr_{i-1}}{\vl} + \frac{\gamma_i^\alpha  - \gamma_i}{\gamma_{i+1} - \gamma_i} \vl_i - \rho(\gamma_i^\alpha ) \\
   =& \iprod{\Gr_{i-1}}{\vl} - \frac{\gamma_i}{\gamma_{i+1} - \gamma_i} \vl_i + \underset{\gamma_i^\alpha  \in \Gamma_i} \sup ~  \gamma_i^\alpha  \frac{\vl_i}{\gamma_{i+1} - \gamma_i} - \rho(\gamma_i^\alpha ) \\
   =& \iprod{\Gr_{i-1}}{\vl} - \frac{\gamma_i}{\gamma_{i+1} - \gamma_i} \vl_i  + (\rho + \delta_{\Gamma_i})^*\left(\frac{\vl_i}{\gamma_{i+1} - \gamma_i}\right) \\
   =&:c_i(\vl) + \rho_i^*\left(\frac{\vl_i}{\gamma_{i+1} - \gamma_i}\right).
  \end{aligned}
  \label{eq:conjugate}
\end{equation}
\end{proof}

\begin{proof}[Proof of Proposition 2]
It is easy to see that 
$$ \dats^*(\vl) = \max_{i \in \{1, ..., L \}}\left( \sum_{l=1}^{i-1} \vl_l - \rho(\gamma_i)\right).$$
To compute the biconjugate, we write any input argument $\ul = \sum_{i=1}^k \mu_i \Gr_{i+1}$, and use $\dats^{**}=\dat^{**}$ to obtain
\begin{align*}
\dat^{**}(\ul) &= \sup_{\vl} \langle \ul, \vl \rangle - \max_{i \in \{1, ..., L \}}\left( \sum_{l=1}^{i-1} \vl_l - \rho(\gamma_i)\right) \\
&= \sup_{\vl} \sum_{i=1}^k \mu_i \sum_{l=1}^i \vl_l - \max_{i \in \{1, ..., L \}} \left( \sum_{l=1}^{i-1} \vl_l - \rho(\gamma_i)\right).
\end{align*}
Instead of taking the supremum of all $\vl$, we might as well take the supremum over all vectors $\textbf{p}$ with $\textbf{p}_i = \sum_{l=1}^i \vl_l$. Care has to be taken of the first summand in the second term of the above formulation. We obtain
\begin{align*}
&\sup_{\vl} \sum_{i=1}^k \mu_i \sum_{l=1}^i \vl_l - \max_{i \in \{1, ..., L \}} \left( \sum_{l=1}^{i-1} \vl_l - \rho(\gamma_i)\right), \\
=& \sup_{\textbf{p}} \sum_{i=1}^k \mu_i \textbf{p}_i - \max_{i \in \{2, ..., L \}} \max(\textbf{p}_{i-1} - \rho(\gamma_i),- \rho(\gamma_1)) , \\
=& \sup_{\textbf{p}} \sum_{i=1}^k \mu_i \textbf{p}_i - \max_{i \in \{1, ..., k \}} \max(\textbf{p}_{i} - \rho(\gamma_{i+1}),- \rho(\gamma_1)) , \\
=&\sum_{i=1}^k \mu_i \; \rho(\gamma_{i+1}) \\
 &+  \sup_{\textbf{p}} \sum_{i=1}^k \mu_i \textbf{p}_i - \max_{i \in \{1, ..., k \}} \max(\textbf{p}_{i},- \rho(\gamma_1)) , 
\end{align*}
Note that for any $\mu_i$ being negative, the supremum immediately yields infinity by taking $\textbf{p}_i \rightarrow - \infty$. Similarly, if $\sum_{i=1}^k \mu_i >1$ yields infinity by taking all $\textbf{p}_i \rightarrow \infty$. For $\mu_i \geq 0$ for all $i$, and $\sum_{i=1}^k \mu_i \leq 1$, we know that $ \sum_{i=1}^k \mu_i \textbf{p}_i \leq (\max_i \textbf{p}_i) \sum_{i=1}^k \mu_i$. Since equality can be obtained by choosing $\textbf{p}_l = \max_i \textbf{p}_i$ for all $l$, we can reduce the above supremum to 
\begin{align*}
&\sup_{z} \left(z \sum_{i=1}^k \mu_i  - \max(z,- \rho(\gamma_1)) \right) = \left(1 -\sum_{i=1}^k \mu_i\right) \rho(\gamma_1),
\end{align*}
where we used that the supremum over $z$ is attained at $z = -\rho(\gamma_1)$ (still assuming that $\sum_{i=1}^k \mu_i \leq 1$). Let us now undo our change of variable. It is easy to see that $\mu_k= \ul_k$, and $\mu_{i} = \ul_{i} - \ul_{i+1}$ for $i =1,...,k-1$. The latter leads to 
\begin{align*}
&\sum_{i=1}^k \mu_i \; \rho(\gamma_{i+1}) +  \left(1 -\sum_{i=1}^k \mu_i \right) \rho(\gamma_1) \\
&= \rho(\gamma_{k+1}) \ul_k + \sum_{i=1}^{k-1} (\ul_{i} - \ul_{i+1}) \; \rho(\gamma_{i+1}) +  (1 - \ul_1) \rho(\gamma_1) \\
&= \rho(\gamma_1) + \langle \ul, \textbf{r}\rangle,
\end{align*}
for $\textbf{r}_i = \rho(\gamma_{i+1})-\rho(\gamma_{i})$. Considering the aforementioned constraints of $\mu_i \geq 0$, and $\sum_{i=1}^k \mu_i\leq 1$, we finally find
\begin{align*}
\dat^{**}(\ul) = \begin{cases}
    \rho(\gamma_1) + \langle \ul, \textbf{r}\rangle &\text{if } ~ 1 \geq \ul_1 \geq ... \geq \ul_k \geq 0 ,\\
    \infty, & \text{else.}
\end{cases}
\end{align*}
\end{proof}

\begin{proof}[Proof of Proposition 3]
For the special case $k = 1$ the biconjugate from \eqref{eq:biconjugate} is just:
\begin{equation}
  \dat^{**}(\ul) = \underset{\vl \in \bbR} \sup ~ \ul \vl - \dat_1^*(\vl) = \dat_1^{**}(\ul).
\end{equation}
Now using the first line in \eqref{eq:conjugate}, $\dat_1^{**}$ becomes:
\begin{equation}
  \begin{aligned}
    \dat_1^{**}(\ul) &= \underset{\vl \in \bbR} \sup ~ \ul \vl -
    \underset{\gamma \in \Gamma} \sup ~ \frac{\gamma - \gamma_1}{\gamma_2 - \gamma_1}  \vl - \rho(\gamma)\\
    &= \underset{\vl \in \bbR} \sup ~ \vl \left(\ul + \frac{\gamma_1}{\gamma_2 - \gamma_1} \right) -
    \underset{\gamma \in \Gamma} \sup ~ \gamma \frac{\vl}{\gamma_2 - \gamma_1} - \rho(\gamma) \\
    &= \underset{\vl \in \bbR} \sup ~ \vl \left(\ul + \frac{\gamma_1}{\gamma_2 - \gamma_1}\right) -
    \rho^*\left(\frac{\vl}{\gamma_2 - \gamma_1} \right) \\
    &= \underset{\tilde \vl \in \bbR} \sup ~ \tilde \vl (\gamma_1 + \ul (\gamma_2 - \gamma_1)) -
    \rho^*(\tilde \vl) \\
    &= \rho^{**}(\gamma_1 + \ul (\gamma_2 - \gamma_1)),
  \end{aligned}
\end{equation}
where we used $\text{dom}(\rho) = \Gamma$ as well as the substitution $\vl = (\gamma_2 - \gamma_1) \tilde \vl$.
\end{proof}

\begin{proof}[Proof of Proposition 4]
  We compute the individual conjugate as:
  \begin{equation}
    \begin{aligned}
      &\reg_{i,j}^*(\ql) = \underset{\gl \in \bbR^{d \times k}}\sup \iprod{g}{\ql} - \reg_{i,j}(\ql)\\
      &= \underset{\alpha, \beta \in [0, 1]} \sup ~ \underset{\nu \in \bbR^d} \sup ~ \iprod{\ql}{(\Gr_i^{\alpha} - \Gr_j^{\beta}) \nu^{\mathsf{T}}} -  \left|\gamma_i^{\alpha}- \gamma_j^{\beta} \right| \normc{ \nu}_2\\
      &= \underset{\alpha, \beta \in [0, 1]} \sup ~ \underset{\nu \in \bbR^d} \sup ~ \iprod{\ql^{\mathsf{T}} (\Gr_i^{\alpha} - \Gr_j^{\beta})}{\nu} -  \left|\gamma_i^{\alpha}- \gamma_j^{\beta} \right| \normc{ \nu}_2\\
      &= \underset{\alpha, \beta \in [0, 1]} \sup ~ \underset{\nu \in \bbR^d} \sup ~ \iprod{\ql^{\mathsf{T}} (\Gr_i^{\alpha} - \Gr_j^{\beta})}{\nu} -  \left|\gamma_i^{\alpha}- \gamma_j^{\beta} \right| \normc{\nu}_2.
    \end{aligned}
  \end{equation}
  The inner maximum over $\nu$ is the conjugate of the $\ell_2$-norm scaled by $ \left|\gamma_i^{\alpha}- \gamma_j^{\beta} \right|$ evaluated at $\ql^{\mathsf{T}} \left(\Gr_i^{\alpha} - \Gr_j^{\beta}\right)$. This yields: 
  \begin{equation}
    \reg_{i,j}^*(\ql) = 
    \begin{cases}
      0, &\text{if }~ \left|\ql^{\mathsf{T}} \left(\Gr_i^{\alpha} -
        \Gr_j^{\beta}\right)\right|_2 \leq  \left|\gamma_i^{\alpha}- \gamma_j^{\beta} \right|,\\
      ~ & \hspace{3.5cm} \forall \alpha,\beta \in [0,1],\\
      \infty, &\text{else.}
    \end{cases}
  \end{equation}
  For the overall biconjugate we have:
  \begin{equation}
    \begin{aligned}
      \reg^{**}(\gl) &= \underset{\ql \in \bbR^{k \times d}} \sup ~
      \iprod{\ql}{\gl} - \underset{1 \leq i,j \leq k} \max ~
      \reg_{i,j}^*(\ql)\\
      &= \underset{\ql \in \mathcal{K}} \sup ~
      \iprod{\ql}{\gl}.
    \end{aligned}
  \end{equation}
  Since we have the $\max$ over all $1 \leq i,j \leq k$ conjugates,
  the set $\mathcal{K}$ is given as the intersection of the sets 
  described by the individual indicator functions $\reg_{i,j}$:
  \begin{equation}
    \begin{aligned}
      \mathcal{K} = \left\{ \ql \in \vphantom{\left| \ql^{\mathsf{T}} ( \Gr_i^{\alpha} - \Gr_j^{\beta} )\right|_2}\right.&\bbR^{k \times d} ~\mid~ \\ 
      & \left| \ql^{\mathsf{T}} ( \Gr_i^{\alpha} - \Gr_j^{\beta} )\right|_2 \leq \left| \gamma_i^{\alpha} - \gamma_j^{\beta}\right|, \\
      &  \left.\vphantom{\left| \ql^{\mathsf{T}} ( \Gr_i^{\alpha} - \Gr_j^{\beta} )\right|_2}\forall ~ 1 \leq i \leq j \leq k, ~ \forall \alpha, \beta \in [0, 1] \right\}.
    \end{aligned}
    \label{eq:constraints_infinite}
  \end{equation}
\end{proof}

\begin{proof}[Proof of Proposition 5]
  First we rewrite \eqref{eq:constraints_infinite} by expanding the matrix-vector product 
  into sums:
  \begin{equation}
    \begin{aligned}
      &\left|\sum_{l=j}^{i-1} \ql_l + \alpha \ql_i - \beta \ql_j\right|_2 \leq \left| \gamma_i^{\alpha} - \gamma_j^{\beta}\right|,\\
      &\forall ~ 1 \leq j \leq i \leq k, ~ \forall \alpha, \beta \in [0, 1].
    \end{aligned}
    \label{eq:constraints_infinite_reduced}
  \end{equation}
  Since the other cases for $1 \leq i \leq j \leq k$ in \eqref{eq:constraints_infinite} are equivalent to \eqref{eq:constraints_infinite_reduced}, it is enough to consider \eqref{eq:constraints_infinite_reduced} instead of \eqref{eq:constraints_infinite}.

  Let $\gamma_1 < \gamma_2 < \hdots < \gamma_L$. We show the equivalences:
  \begin{center}
    \eqref{eq:constraints_infinite_reduced}
  \end{center}
  \begin{center}
    $\Leftrightarrow$
  \end{center}
  \begin{equation}
    \begin{aligned}
      &\left| \sum_{l=j}^{i} \ql_l \right|_2 \leq \gamma_{i+1} - \gamma_j,
      ~\forall ~ 1 \leq j \leq i \leq k.
    \end{aligned}
    \label{eq:constraints_quadratic}
  \end{equation}
  \begin{center}
    $\Leftrightarrow$
  \end{center}
  \begin{equation}
    \normc{\ql_i}_2 \leq \gamma_{i+1} - \gamma_i, ~ \forall ~ 1 \leq i \leq k.
    \label{eq:constraints_linear}
  \end{equation}
  The direction ``\eqref{eq:constraints_infinite_reduced} $\Rightarrow$ \eqref{eq:constraints_quadratic}'' follows by setting $\alpha = 1$ and $\beta = 0$ in \eqref{eq:constraints_infinite_reduced}, and ``\eqref{eq:constraints_quadratic} $\Rightarrow$ \eqref{eq:constraints_linear}'' follows by setting $i = j$ in \eqref{eq:constraints_quadratic}.

The direction ``\eqref{eq:constraints_linear} $\Rightarrow$ \eqref{eq:constraints_quadratic}'' can be proven by a quick calculation:
\begin{equation}
  \left| \sum_{l=j}^{i} \ql_l \right|_2 \leq \sum_{l=j}^i \normc{\ql_l}_2 \leq \sum_{l=j}^i \gamma_{l+1} - \gamma_l = \gamma_{i+1} - \gamma_j.
\end{equation}
It remains to show ``\eqref{eq:constraints_quadratic} $\Rightarrow$ \eqref{eq:constraints_infinite_reduced}''. We start with the case $j = i$:
\begin{equation}
  \begin{aligned}
    \normc{\alpha \ql_i - \beta \ql_i}_2 &= |\alpha - \beta| \normc{\ql_i}_2 \\
    &\leq |\alpha - \beta| (\gamma_{i+1} - \gamma_i)\\
    &= | (\gamma_{i+1} - \gamma_i) \alpha -  (\gamma_{i+1} - \gamma_i) \beta|\\
    &= | (\alpha - \beta) (\gamma_{i+1} - \gamma_i) | = |\gamma_i^{\alpha} - \gamma_i^{\beta}|.
  \end{aligned}
\end{equation}
Now let $j < i$. Since $\gamma_j < \gamma_i$ it also holds that $\gamma_j^{\beta} \leq \gamma_i^{\alpha}$, thus
it is equivalent to show \eqref{eq:constraints_infinite_reduced} without the absolute value on the right hand side.

First we show that ``\eqref{eq:constraints_quadratic} $\Rightarrow$ \eqref{eq:constraints_infinite_reduced}'' for $\beta \in \{ 0, 1\}$ and $\alpha \in [0,1]$:
\begin{equation}
  \begin{aligned}
    &\left|\sum_{l=j+1}^{i-1} \ql_l + \alpha \ql_i + (1 - \beta) \ql_j\right|_2 \\
    &\leq \left|\sum_{l=j+1}^{i-1} \ql_l + (1 - \beta) \ql_j\right|_2 + \alpha \normc{\ql_i}_2 \\
    &\overset{\text{for } \beta=0 \text{ or } \beta=1} \leq \gamma_i - \gamma_j^{\beta} + \alpha(\gamma_{i+1} - \gamma_i)\\
    &= \gamma_i^{\alpha} - \gamma_j^{\beta}.
  \end{aligned}
  \label{eq:temp}
\end{equation}
Using a similar argument we show that, using the above, ``\eqref{eq:constraints_quadratic} $\Rightarrow$ \eqref{eq:constraints_infinite_reduced}'' for all $\alpha, \beta \in [0,1]$.
\begin{equation}
  \begin{aligned}
    &\left|\sum_{l=j+1}^{i-1} \ql_l + \alpha \ql_i + (1 - \beta) \ql_j\right|_2 \\
    &\leq \left|\sum_{l=j+1}^{i-1} \ql_l + \alpha \ql_i\right|_2 + (1 - \beta) \normc{\ql_j}_2\\
    &\overset{\text{using } \eqref{eq:temp}, \beta=1} \leq \gamma_i^{\alpha} - \gamma_{j+1} + (1 - \beta) (\gamma_{j+1} - \gamma_j)\\
    &= \gamma_i^{\alpha} - \gamma_j^{\beta}.
  \end{aligned}
\end{equation}

\end{proof}

\end{appendix}

{\small
  \bibliographystyle{ieee}
  \bibliography{references}
}

\end{document}